\documentclass{article}
\usepackage{iclr2026_conference,times}

\usepackage{amsmath,amsfonts,bm}

\def\eqref#1{equation~\ref{#1}}

\def\1{\bm{1}}

\DeclareMathAlphabet{\mathsfit}{\encodingdefault}{\sfdefault}{m}{sl}
\SetMathAlphabet{\mathsfit}{bold}{\encodingdefault}{\sfdefault}{bx}{n}

\usepackage{amssymb}
\usepackage{amsthm}
\usepackage{booktabs}
\usepackage{float}
\usepackage{graphicx}
\usepackage{placeins}
\usepackage{hyperref}
\usepackage{microtype}
\usepackage{url}

\newtheorem{definition}{Definition}

\newtheorem{theorem}{Theorem}

\newtheorem{corollary}{Corollary}
\newtheorem{proposition}{Proposition}

\newcommand{\Zspace}{\mathcal{Z}}
\newcommand{\Mspace}{\mathcal{M}}
\newcommand{\Vset}{\mathcal{V}}
\newcommand{\Relset}{\mathcal{R}}
\newcommand{\Edges}{\mathcal{E}}
\newcommand{\Tspace}{\mathfrak{T}}
\newcommand{\Pathset}{\mathcal{P}}
\newcommand{\Alignset}{\mathfrak{A}}
\newcommand{\dM}{d_{\Mspace}}

\newcommand{\Gstar}{G^{\star}}
\newcommand{\GR}{G_R}
\newcommand{\Cand}{\Gamma}
\newcommand{\Feas}{F_R}
\newcommand{\SupportRegion}{C_R}
\newcommand{\TV}{\operatorname{TV}}
\newcommand{\osc}{\operatorname{osc}}
\newcommand{\dist}{\operatorname{dist}}
\newcommand{\supp}{\operatorname{supp}}

\title{Grounding LLM Reasoning under Incomplete Graph Evidence}
\author{Jiaqi Li\textsuperscript{1}, Fanghui Song\textsuperscript{2}\\
1. Tianjin Normal University, College of Computer and Information Engineering,
Tianjin, China\\
2. Harbin Institute of Technology, School of Mathematics, Harbin, China\\
jqli@tjnu.edu.cn, fanghuis1997@gmail.com}

\iclrfinalcopy
\begin{document}
\maketitle
\setlength{\abovedisplayskip}{6pt plus 2pt minus 2pt}
\setlength{\belowdisplayskip}{6pt plus 2pt minus 2pt}

\begin{abstract}
Knowledge graphs can guide large language models (LLMs) reasoning, but the graph seen by a system is usually a retrieved, linked, temporally scoped, and incomplete evidence state rather than a complete account of truth.  
We develop a theoretical perspective on grounding observable LLM trajectories under such incomplete graph evidence.  
The evidence state induces entity anchors, typed relation residuals, path energies, and support regions, while the language model supplies a prior over candidate trajectories.  We show that, under open-world incompleteness, no hard rule based only on the observed state can both reject every false
unsupported trajectory and retain every true-but-unobserved one.  
We then characterize soft grounding as a KL-regularized deformation of the LLM prior: finite slack preserves support for unsupported but non-contradicted trajectories, whereas hard conditioning appears as an infinite-penalty limit.  
The framework also yields stability bounds under evidence perturbations and clarifies the constraint regimes appropriate for GraphRAG, KGQA, graph agents, constrained decoding, and faithful generation.  
The claims are evidence-relative: KG compatibility is treated as declared support, not factual truth.
\end{abstract}

\noindent\textbf{Keywords:} knowledge graphs; large language models; geometric
grounding; incomplete evidence; posterior regularization; faithful generation.

\section{Introduction}
\label{sec:introduction}

Large language models provide a flexible interface for knowledge-intensive
reasoning, but final-answer accuracy can hide a fragile evidential route.
Consider the Complex WebQuestions question \citep{talmor2018web}: ``Lou Seal is
the mascot for the team that last won the World Series when?''  A supported route
anchors \textsc{Lou Seal}, follows the mascot relation to the \textsc{San
Francisco Giants}, and then follows the team's most recent championship relation
to the \textsc{2014 World Series}.  A model can nevertheless reach 2014 after
drifting through the wrong team, reversing a typed relation, or inventing an
unsupported bridge.  Conversely, a correct route can be removed when retrieval
omits the final championship edge.  Answer-level evaluation collapses these
failures even though they have different causes and remedies.

We study the observable or reconstructed trajectory that a system displays,
retrieves, scores, revises, or asks a user to trust.  This object is an
output-facing sequence of claims rather than a window into the model's hidden
causal computation.  It is nonetheless the right unit for many KG-enhanced
systems, because their evidence is exposed through linked entities, retrieved
subgraphs, typed paths, graph actions, or provenance.

The central complication is epistemic.  The KG available to a system is rarely a
complete truth set.  It is an evidence state $R$ shaped by retrieval, entity
linking, schema matching, temporal snapshots, and provenance filtering.  A
low-energy path in $R$ is evidence support, not a certificate of truth; failure
to find such a path is missing support, not automatically falsity.  Grounding is
therefore a relation relative to an observed state, not an oracle evaluation
against an inaccessible complete graph.

This distinction yields an elementary but consequential impossibility.  Two
latent worlds can produce exactly the same observed state while disagreeing
about a missing claim.  Any hard rule that uses only the observed evidence must
make the same decision in both worlds.  It cannot simultaneously retain every
true-but-unobserved trajectory and reject every false unsupported trajectory.
The limitation is informational, not computational: stronger search or a better
embedding cannot recover a distinction erased by the observation process.

We develop a theoretical perspective in which KG-enhanced grounding is modeled
as posterior deformation over candidate trajectories.  The language model
supplies a prior over trajectories, while the observed evidence state induces
energy through entity anchors, typed relation residuals, path composition,
reliability costs, and support regions.  Grounding then reweights the prior
toward lower-energy trajectories while preserving a principled role for prior
uncertainty and graph incompleteness.  Finite slack keeps unsupported but
non-contradicted trajectories in consideration; hard feasible-set conditioning
appears as an infinite-penalty boundary case.

Geometry supplies the structure of the evidence energy.  Entities act as
anchors, typed relations as transformations or compatibility maps, paths as
relational compositions, and energy sublevel sets as evidence-relative support
regions.  The construction is metric-space based and does not require a
particular Euclidean KG embedding.

Our main contributions are as follows.
\begin{itemize}
    \item We formalize KG-enhanced LLM reasoning as evidence-relative grounding over observable trajectories, explicitly accounting for incomplete graph evidence.
    \item We present a relational-geometric framework that represents KG evidence through entity anchors, typed relation compatibility, path composition, energy-based support, and evidence-relative support regions.
    \item We prove that hard grounding is information-theoretically limited under incomplete evidence: the observed graph alone cannot, in general, distinguish false unsupported trajectories from true but unobserved ones.
    \item We characterize soft grounding as KL-regularized posterior deformation and derive its Gibbs form, hard-limit behavior, evidence-margin conditions, and stability guarantees under bounded evidence perturbations.
\end{itemize}

We position the paper as a theoretical perspective.  Its goal is to clarify the form and limits of evidence-relative grounding, rather than to introduce a new KG encoder, decoding algorithm, or empirical state-of-the-art result.

\begin{figure}[t]
\centering
\IfFileExists{Figure/Figure1.png}{\includegraphics[width=0.85\linewidth]{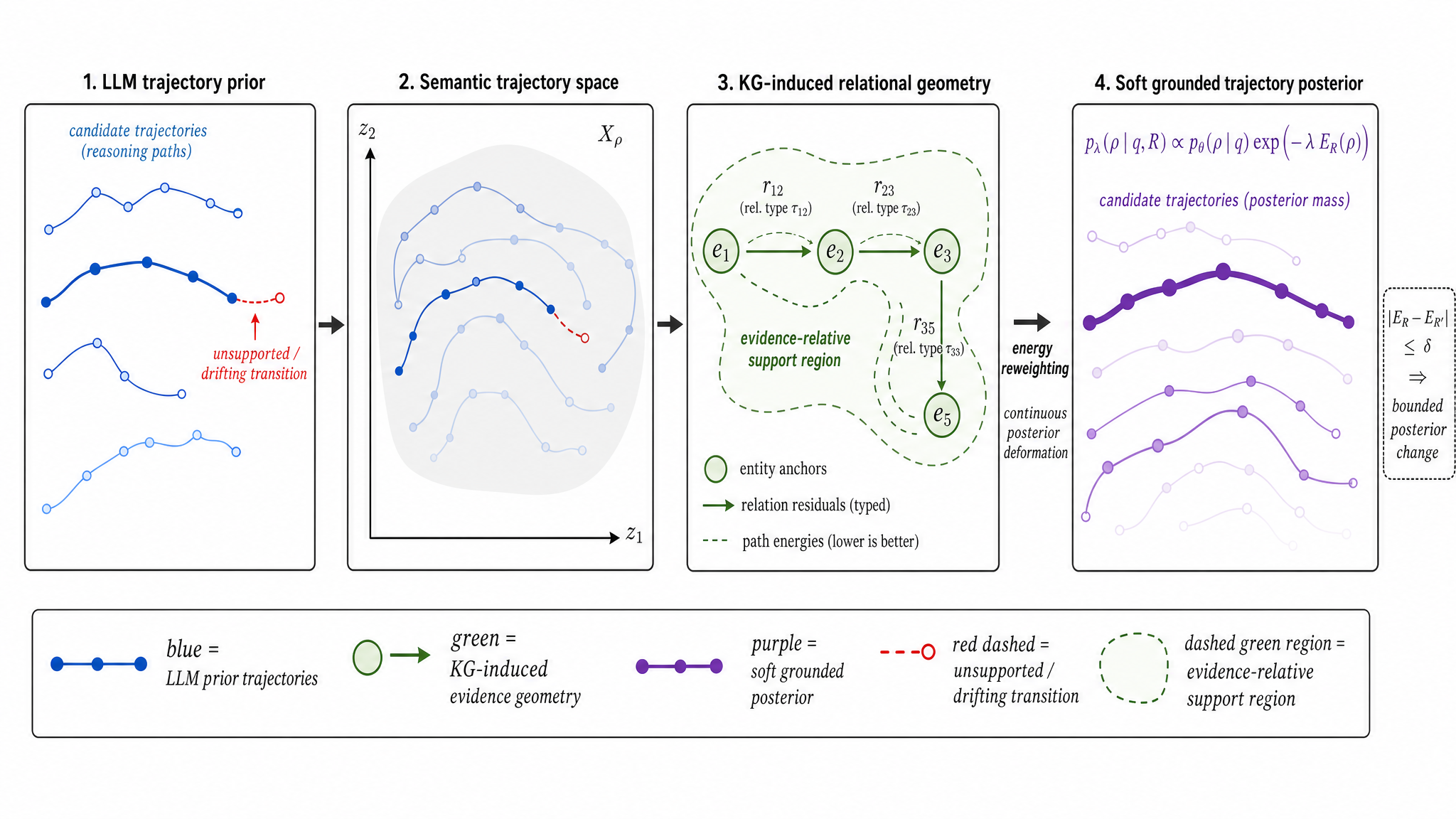}}{%
\fbox{\parbox{0.94\linewidth}{\centering LLM prior, random evidence state,
KG-induced energy, and KL-regularized grounded posterior.}}}
\caption{Evidence-relative geometric grounding.  The observed state $R$ induces
typed relational energies that deform, rather than replace, the LLM trajectory
prior.  Hard feasibility is an infinite-penalty boundary case.}
\label{fig:geometric-grounding}
\end{figure}

\section{Background and Critical Positioning}
\label{sec:background}

Several strands of KG-enhanced LLM research already expose intermediate objects
that can be inspected: chain-of-thought, tree-structured reasoning,
graph-of-thought procedures, and reason-and-act prompting make textual states or
actions visible \citep{wei2022CoT,Yao2023,yao2023react,besta2024got}.  RAG
conditions generation on retrieved evidence \citep{Lewis2020rag}, while GraphRAG
and graph neural retrieval preserve graph organization when constructing context
\citep{Edge2024GraphRAG,mavromatis2024gnn-rag}.  KGQA, path-reasoning systems,
and graph agents make entities, relations, paths, and graph actions explicit
\citep{yasunaga2021qa-gnn,jiang2023structgpt,Kim2023kg-gpt,Luo2024RoG,
Sun2023ThinkonGraphDA,Jiang2024KG-agent}.  These methods differ substantially in
architecture, but they share a basic exposure of evidence-bearing objects.  The
question for this paper is whether such an object supports a trajectory, not
whether it reveals the hidden computation of the model.

Existing systems also differ in how they use graph evidence.  A retrieved
community can be topically relevant while a generated trajectory uses the wrong
entity, reverses a relation, or bridges two facts that were never observed in the
same evidence state.  A valid KG path may serve as context, a reranking feature,
or a hard admissibility requirement, and those choices carry different coverage
risks.  Constrained decoding enforces admissibility during generation
\citep{hokamp2017constrained}; KG-grounded and faithful generation ask whether
produced content is traceable to evidence \citep{Luo2024RoG,sui2025okgqa}.  What
often remains implicit is the status of absence: it may indicate contradiction,
mere missingness, or a schema mismatch.  The present theory treats those cases as
different evidence states rather than as variants of a single failure label.

The geometric language has several precedents.  KG embeddings provide relation
maps such as TransE translations \citep{bordes2013translating}, and energy-based
learning supplies a general vocabulary for compatibility
\citep{lecun2006ebm}.  Graph signal processing and graph neural networks motivate
Laplacian smoothness \citep{shuman2013gsp,defferrard2016gcn}, but typed directed
KGs require a different invariance: an author transforms into a paper
representation under \textsc{writes} rather than merely resembling the paper.
Posterior regularization and logic-regularized learning already show how soft
constraints can deform probabilistic predictions
\citep{ganchev2010posterior,hu2016logic}.  Our setting differs in the source and
meaning of the constraint: it is induced by an incomplete evidence state, and it
acts on observable trajectories.

Surveys organize KG-enhanced LLM work by task, pipeline, or KG role
\citep{pan2024llmkg,procko2024graph,fan2024rag_survey}.  The perspective here is
orthogonal to those taxonomies.  It asks what evidence object is constructed,
what operation is applied to a candidate trajectory, and how strongly the
evidence-induced energy deforms the LLM prior.  That lens is intended to expose
information assumptions shared across architectures rather than to rank existing
systems.

\section{Observable Trajectories and Evidence States}
\label{sec:setup}

\subsection{Textual and semantic trajectories}

\begin{definition}[Observable reasoning trajectory]
For a query $q\in\Zspace$, an observable reasoning trajectory is a finite
sequence
\[
  \rho=(z_0,z_1,\ldots,z_T)\in\Tspace(q),
\]
where the $z_t$ are utterances, extracted claims, tool observations, or graph
actions and $z_T$ contains a final answer or decision.  The object is observable
or reconstructed; it is not assumed to be the model's hidden causal computation.
\end{definition}

Let $\Cand(q)\subset\Tspace(q)$ be a finite candidate set.  The finiteness
assumption makes the posterior results elementary and transparent; extensions to
countable or continuous spaces require the usual integrability and absolute
continuity conditions.  The LLM prior is
\begin{equation}
 p_\theta(\rho\mid q),\qquad \rho\in\Cand(q),
 \qquad \sum_{\rho\in\Cand(q)}p_\theta(\rho\mid q)=1.
 \label{eq:trajectory-prior}
\end{equation}
It may be obtained by sampling, beam normalization, self-consistency, or an
explicit candidate generator.  A zero prior cannot be repaired by subsequent
KL-regularized grounding; candidate generation therefore remains an upstream
support condition.

Let $(\Mspace,\dM)$ be a semantic metric space and
$\phi:\Zspace\to\Mspace$ an operational representation map.  The semantic
trajectory is $X_\rho=(x_0,\ldots,x_T)$ with $x_t=\phi(z_t)$.  For trajectories
of equal length one may use
\begin{equation}
 d_{\mathrm{prod}}(X_\rho,X_{\rho'})^2
 =\sum_{t=0}^{T}a_t\dM(x_t,x_t')^2,
 \qquad a_t>0.
 \label{eq:product-trajectory-distance}
\end{equation}
The paper does not require $\Mspace$ to be Euclidean or claim that $\phi$
exhausts the semantics of a claim.

For variable-length trajectories, let $\mathcal A(\rho,\rho')$ be a declared set
of admissible alignments and define
\begin{equation}
 d_{\mathcal A}(\rho,\rho')
 =\inf_{A\in\mathcal A(\rho,\rho')}
 \left\{\sum_{(i,j)\in A}c(z_i,z_j')+\operatorname{gap}(A)\right\}.
 \label{eq:alignment-cost}
\end{equation}
This template includes dynamic time warping over claim embeddings, edit costs on
extracted claims, discrete Fr\'echet alignments, optimal transport over claim
sets, and alignment of entity--relation events.  We call
Eq.~(\ref{eq:alignment-cost}) a cost unless the selected construction satisfies
symmetry, identity of indiscernibles, and a triangle inequality.  The posterior
theory requires only a well-defined energy on $\Cand$; projection statements
require a genuine metric or adequate topology.

\subsection{Latent knowledge and observed evidence}

Let $\Gstar$ denote a latent complete knowledge world used only to state
epistemic properties.  It is not available to the grounding algorithm.  The
system observes
\begin{equation}
 R\sim\mathcal O(\Gstar,q),
 \label{eq:evidence-process}
\end{equation}
where the observation process may include KG snapshot selection, retrieval,
entity linking, schema matching, temporal filtering, and provenance assessment.
When graph structure must be named explicitly, write
\[
 \GR=(\Vset_R,\Relset_R,\Edges_R)\subseteq R
\]
for the retrieved snapshot or subgraph contained in the broader state $R$.
Keeping $R$ distinct from $\GR$ matters: two systems can retrieve the same
triples but attach different linking confidences, timestamps, or provenance.

\begin{definition}[Evidence-relative energy]
An evidence-relative energy is a map
\[
 E_R:\Cand(q)\to\mathbb R\cup\{+\infty\}
\]
whose value depends only on $(q,R,\rho)$ and declared representation and
alignment choices.  Lower values indicate stronger compatibility with $R$.
They do not indicate truth in $\Gstar$.
\end{definition}

The raw path energy may be $+\infty$ when no admissible path exists.  Soft
grounding under incomplete evidence will instead use a finite slack-extended
energy.  We reserve $E_R$ for the total energy used by a theorem and state
explicitly whether it is finite.

\subsection{Four evidence statuses}

For declared tolerances, a trajectory or claim can occupy four operational
states:
\begin{enumerate}
 \item \textbf{supported}: a sufficiently reliable alignment has
 $E_R^{\mathrm{sup}}(\rho)\le\epsilon_s$;
 \item \textbf{contradicted}: reliable evidence provides an incompatible typed
 relation, answer, or temporal claim, represented by an explicit counter-support
 condition;
 \item \textbf{unsupported}: neither adequate support nor reliable
 counter-support is present;
 \item \textbf{out of schema}: the relevant claim cannot be expressed by the KG
 schema or the declared alignment language.
\end{enumerate}
These are evidence statuses.  In particular, ``unsupported'' is compatible with
both a true missing fact and a false fluent claim.

\subsection{Hard feasible sets and soft posteriors}

For a tolerance $\epsilon$, define the hard feasible set
\begin{equation}
 \Feas(q;\epsilon)=\{\rho\in\Cand(q):E_R(\rho)\le\epsilon\}.
 \label{eq:hard-feasible-set}
\end{equation}
When $p_\theta(\Feas\mid q)>0$, hard grounding conditions the prior on this set:
\begin{equation}
 p_H^R(\rho\mid q)
 =\frac{p_\theta(\rho\mid q)\mathbf 1\{\rho\in\Feas\}}
 {p_\theta(\Feas\mid q)}.
 \label{eq:hard-posterior}
\end{equation}
For a finite energy and $\lambda\ge0$, soft grounding instead defines
\begin{equation}
 p_\lambda^R(\rho\mid q)
 =\frac{p_\theta(\rho\mid q)e^{-\lambda E_R(\rho)}}
 {Z_R(\lambda)},
 \qquad
 Z_R(\lambda)=\sum_{\rho'\in\Cand(q)}
 p_\theta(\rho'\mid q)e^{-\lambda E_R(\rho')}.
 \label{eq:soft-posterior}
\end{equation}
Sections~\ref{sec:hard}--\ref{sec:stability} characterize what these two choices
can and cannot establish.

\begin{table}[t]
\centering
\small
\caption{Core notation.}
\label{tab:notation}
\begin{tabular}{p{0.24\linewidth}p{0.68\linewidth}}
\toprule
Symbol & Meaning \\
\midrule
$\Gstar$ & latent knowledge world, unavailable to the algorithm \\
$R$, $\GR$ & observed evidence state and its retrieved KG snapshot \\
$\Cand(q)$ & finite candidate set of observable trajectories \\
$\rho$, $X_\rho$ & textual trajectory and semantic realization \\
$p_\theta(\rho\mid q)$ & LLM trajectory prior on $\Cand(q)$ \\
$E_R(\rho)$ & evidence-relative trajectory energy \\
$\SupportRegion(q;\epsilon)$ & semantic support region induced by $R$ \\
$\Feas(q;\epsilon)$ & hard feasible trajectory set \\
$p_\lambda^R(\rho\mid q)$ & soft grounded posterior \\
$\lambda$, $\sigma_R$, $\tau$ & grounding strength, finite slack, path temperature \\
\bottomrule
\end{tabular}
\end{table}

\section{Axioms for Evidence-Relative Grounding}
\label{sec:axioms}

Let a grounding operator map $(q,R,p_\theta)$ to a posterior
$\pi_R\in\Delta(\Cand(q))$ and, when needed, to an action in
\{\textsc{answer}, \textsc{abstain}, \textsc{retrieve}, \textsc{verify}\}.
The axioms below describe the evidential commitments made by open-world
grounding.  They are modeling requirements for the present analysis, not a claim
that every KG-enhanced system already satisfies them.

\paragraph{Axiom 1: Evidence relativity.}
If two latent worlds produce the same $(q,R,p_\theta)$, the grounding operator
returns the same posterior and decision.  Formally, grounding is measurable with
respect to the observed information, not $\Gstar$.  Grounding claims are therefore
indexed by the evidence state relative to which support is assessed.

\paragraph{Axiom 2: Non-falsity of absence.}
If $\rho$ is unsupported but not contradicted and
$p_\theta(\rho\mid q)>0$, missing support alone is not a semantic judgment of
falsehood.  A compatible operator either preserves positive candidacy,
$\pi_R(\rho)>0$, or routes the case to abstention, renewed retrieval, or
verification.

\paragraph{Axiom 3: Direction sensitivity.}
Whenever $r(u,v)$ and $r(v,u)$ have different meanings, the evidence energy
distinguishes them.  In a relational realization this requires, for some relevant
$u,v$,
\[
 \dM(T_r\eta(u),\eta(v))
 \ne \dM(T_r\eta(v),\eta(u)).
\]
If an inverse relation is defined, the reversed transition is evaluated using
$T_{r^{-1}}$, not by symmetrizing $T_r$.

\paragraph{Axiom 4: Contradiction dominance.}
Reliable counter-evidence carries a larger penalty than mere absence.  A
convenient form is the existence of $\kappa_R>0$ such that, after normalization,
\[
 \rho_b\in B_R,\ \rho_u\in U_R
 \quad\Longrightarrow\quad
 E_R(\rho_b)\ge E_R(\rho_u)+\kappa_R,
\]
where $B_R$ and $U_R$ are the contradicted and unsupported states.

\paragraph{Axiom 5: Minimal intervention.}
Grounding changes the LLM prior only through the declared evidence objective.
Posterior-level minimal intervention is represented by
\[
 \pi_R\in\arg\min_{\pi\ll p_\theta}
 \left\{\mathrm{KL}(\pi\|p_\theta)
 +\lambda_R\mathbb E_\pi[E_R]\right\}.
\]
This is a statement about distributional deformation, not about textual edit
distance.

\paragraph{Axiom 6: Coverage awareness.}
Constraint strength depends on evidence coverage or trust.  If $c_R$ is a
declared coverage estimate, one possible calibration is
\[
 \lambda_R=\lambda(c_R),\qquad \sigma_R=\sigma(c_R),
 \qquad \lambda'(c),\sigma'(c)\ge0,
\]
where $\sigma_R$ is the cost of missing support.  Larger $\sigma_R$ treats
absence as more costly, so such a calibration is appropriate only when the
observation process is trusted to have high coverage.  Low coverage keeps absence
weak evidence.

\paragraph{Axiom 7: Stability under evidence perturbation.}
For finite energies, bounded evidence-induced energy changes induce bounded
posterior movement.  Section~\ref{sec:stability} proves the representative bound
\[
 \osc_{\supp p_\theta}(E_R-E_{R'})\le\omega
 \quad\Longrightarrow\quad
 \TV(p_\lambda^R,p_\lambda^{R'})
 \le\tanh\!\left(\frac{\lambda\omega}{2}\right).
\]

\paragraph{Axiom 8: Abstention consistency.}
If the best candidate is unsupported but not contradicted, or if its support
margin is below a declared threshold, the decision space includes abstention,
renewed retrieval, or verification.  Insufficient evidence is not silently
converted into either a positive answer or a hard negative.

\begin{table}[t]
\centering
\small
\caption{Axiomatic compatibility under incomplete evidence.  A triangle denotes
compatibility only with an additional typed, coverage-aware, or selective layer.}
\label{tab:axiom-compatibility}
\begin{tabular}{lccc}
\toprule
Axiom & Hard set & Soft, no slack & Soft, finite slack \\
\midrule
Evidence relativity & $\checkmark$ & $\checkmark$ & $\checkmark$ \\
Non-falsity of absence & $\times$ & $\times$ if $+\infty$ & $\checkmark$ \\
Direction sensitivity & $\triangle$ & $\triangle$ & $\triangle$ \\
Contradiction dominance & $\times$ & $\triangle$ & $\checkmark$ \\
Minimal intervention & $\times$ in general & $\checkmark$ & $\checkmark$ \\
Coverage awareness & $\triangle$ & $\triangle$ & $\checkmark$ \\
Perturbation stability & $\times$ & $\triangle$ & $\checkmark$ \\
Abstention consistency & $\triangle$ & $\triangle$ & $\checkmark$ \\
\bottomrule
\end{tabular}
\end{table}

Taken together, the axioms explain why finite slack is a natural default under
incomplete open-world evidence, while leaving room for hard constraints when
their completeness assumptions are explicit.  Direction sensitivity and
abstention also depend on energy and decision constructions beyond exponential
reweighting alone.

\section{KG-Induced Relational Geometry}
\label{sec:geometry}

The role of geometry is operational.  It provides distances and
relation-conditioned compatibility functions with which a trajectory can be
compared to the evidence in $R$.  It does not assert that an LLM internally
reasons in a particular embedding space.

\subsection{Entity anchors and typed residuals}

Let the observed snapshot be the typed directed graph
$\GR=(\Vset_R,\Relset_R,\Edges_R)$.  A geometric realization consists of an
entity map $\eta:\Vset_R\to\Mspace$ and relation maps
$\{T_r:\Mspace\to\Mspace\}_{r\in\Relset_R}$.  A retrieved triple
$e=(u,r,v)$ has residual
\begin{equation}
 c_{\mathrm{rel}}(e;R)
 =\dM(T_r\eta(u),\eta(v))+\xi(-\log w_e),
 \qquad w_e\in(0,1],\ \xi\ge0,
 \label{eq:typed-residual}
\end{equation}
where $w_e$ represents declared provenance, extraction confidence, or temporal
reliability.  The probabilistic reading
$w_e\approx\Pr(e\text{ reliable}\mid\operatorname{provenance}(e),R)$ is optional
and valid only when such a model is calibrated.  Otherwise $-\log w_e$ is simply
a reliability cost.

Let $\psi_i(\rho)\in\Mspace$ be a claim- or mention-level representation selected
for grounding.  An assignment $a$ maps selected claims to entities or edges in a
path
\[
 \gamma=(v_0,r_1,v_1,\ldots,r_k,v_k),
 \qquad (v_{i-1},r_i,v_i)\in\Edges_R.
\]
Its entity-alignment cost is
\begin{equation}
 c_{\mathrm{ent}}(X_\rho,a)
 =\sum_{(i,v)\in a}\dM(\psi_i(\rho),\eta(v)).
 \label{eq:entity-alignment}
\end{equation}
The path energy is
\begin{equation}
 E_{\mathrm{path}}(X_\rho,\gamma,a;R)
 =c_{\mathrm{ent}}(X_\rho,a)
 +\alpha\sum_{i=1}^{k}c_{\mathrm{rel}}((v_{i-1},r_i,v_i);R),
 \qquad \alpha\ge0.
 \label{eq:path-energy-theory}
\end{equation}
All coefficients calibrate quantities whose raw units need not agree.  They are
evidence-state-dependent trust parameters, not universal constants.

For a retrieved path family $\Pathset_q(R)$ and permissible assignments
$\Alignset(\rho,\gamma)$, the best-path energy is
\begin{equation}
 E_R^{\min}(\rho)
 =\min_{\gamma\in\Pathset_q(R)}
 \min_{a\in\Alignset(\rho,\gamma)}
 E_{\mathrm{path}}(X_\rho,\gamma,a;R),
 \label{eq:min-path-energy}
\end{equation}
with value $+\infty$ when no admissible pair exists.  This raw convention is
useful for defining hard support but must be replaced by finite slack before
applying the finite-energy posterior theory.

\paragraph{Relevance is not support.}
A GraphRAG retriever can return a baseball community containing Lou Seal and the
Giants while a generated step reverses \textsc{mascotOf}.  Community relevance
does not constrain the direction of the transition.  A typed residual can assign
small energy to
$\textsc{Lou Seal}\xrightarrow{\textsc{mascotOf}}\textsc{Giants}$ and large
energy to its reversal even when both utterances are semantically similar.

\subsection{Support regions}

\begin{definition}[Evidence-relative support region]
For tolerance $\epsilon\ge0$, define
\begin{equation}
 \SupportRegion(q;\epsilon)
 =\{X_\rho:\rho\in\Cand(q),\ E_R(\rho)\le\epsilon\}.
 \label{eq:support-region}
\end{equation}
When the energy is computed through the semantic realization $X_\rho$, we write
$E_R(X_\rho;q)$ only as shorthand for $E_R(\rho)$.
\end{definition}

\begin{proposition}[Basic support-region properties]
\label{prop:support-region}
For fixed $(q,R)$:
\begin{enumerate}
 \item if $\epsilon_1\le\epsilon_2$, then
 $\SupportRegion(q;\epsilon_1)\subseteq\SupportRegion(q;\epsilon_2)$;
 \item if $E_R(\cdot;q)$ is lower semicontinuous on the semantic trajectory
 space, then $\SupportRegion(q;\epsilon)$ is closed;
 \item if $\SupportRegion(q;\epsilon)$ is nonempty and compact, every trajectory
 $X$ has at least one metric projection onto it.
\end{enumerate}
The projection need not be unique without additional convexity or geodesic
assumptions.
\end{proposition}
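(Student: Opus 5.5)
The three items are elementary consequences of the definition of $\SupportRegion(q;\epsilon)$ as an energy sublevel set, so the plan is to argue each directly from Eq.~(\ref{eq:support-region}). Throughout, the relevant ambient space is the semantic trajectory space with a genuine metric $d$. This is either $d_{\mathrm{prod}}$ from Eq.~(\ref{eq:product-trajectory-distance}) for equal-length trajectories, or an alignment construction that satisfies the metric axioms, in line with the remark after Eq.~(\ref{eq:alignment-cost}) that projection statements require a genuine metric. We regard $E_R(\cdot;q)$ as a function on that space, so that $\SupportRegion(q;\epsilon)=\{X: E_R(X;q)\le\epsilon\}$. Since $\Cand(q)$ is finite, the region is literally a finite set of points, and that alone would already give closedness and compactness. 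I will nevertheless give the sublevel-set argument, because it is the form that survives the countable or continuous extensions mentioned in Section~\ref{sec:setup}.

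\textbf{Monotonicity.} If $E_R(\rho)\le\epsilon_1$ and $\epsilon_1\le\epsilon_2$, then $E_R(\rho)\le\epsilon_2$. The inclusion of sublevel sets follows immediately. No topology is needed, and the value $+\infty$ causes no trouble, because such trajectories lie in no region with finite $\epsilon$.

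\textbf{Closedness.} Take a sequence $X_n\in\SupportRegion(q;\epsilon)$ with $X_n\to X$; sequences suffice because the space is metric. Lower semicontinuity gives
\[
 E_R(X;q)\le\liminf_{n} E_R(X_n;q)\le\epsilon,
\]
so $X\in\SupportRegion(q;\epsilon)$. Equivalently, the complement $\{E_R>\epsilon\}$ is open by the definition of lower semicontinuity. Extended values in $\mathbb R\cup\{+\infty\}$ are fine under the usual order.

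\textbf{Existence of projections.} Fix any trajectory $X$ and consider $f(Y)=d(X,Y)$ on $K=\SupportRegion(q;\epsilon)$. The function $f$ is $1$-Lipschitz, hence continuous, and $K$ is nonempty and compact. By the Weierstrass extreme value theorem, $f$ attains its infimum at some $Y^\star\in K$, and this $Y^\star$ is a metric projection. The only step needing care is making sure a genuine metric, or at least a continuous distance function, is in force. If one only has an alignment cost, I would require lower semicontinuity of $Y\mapsto d_{\mathcal A}(X,Y)$, which still yields a minimizer on a compact set.

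For the closing remark on non-uniqueness, I would give a concrete example with two equidistant points. Take $\Mspace=\mathbb R$ and $T=0$, with energy values chosen so that $K=\{-1,1\}$ and $X=0$; both points of $K$ are projections of $X$. This shows that uniqueness genuinely requires convexity, or a Chebyshev-type geodesic assumption, beyond the hypotheses of the proposition. I expect no real obstacle; the main point is to state clearly which metric the projection is taken with respect to.
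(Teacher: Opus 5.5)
Your proposal is correct and follows the same route as the paper: item 1 is inclusion of sublevel sets, item 2 is the sublevel-set characterization of lower semicontinuity, and item 3 is attainment of the minimum of the continuous function $Y\mapsto d(X,Y)$ on a nonempty compact set. Two of your points go beyond what the paper states: you require a genuine metric (or a lower semicontinuous alignment cost) for the projection step, and you give the explicit example $K=\{-1,1\}$, $X=0$ for non-uniqueness; the paper leaves both implicit.
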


\begin{proof}
The first statement is inclusion of sublevel sets.  The second is the sublevel-set
characterization of lower semicontinuity.  For the third, the continuous function
$Y\mapsto d_{\mathcal A}(X,Y)$ attains a minimum on a nonempty compact set.
\end{proof}

The distance $\dist(X_\rho,\SupportRegion(q;\epsilon))$ is an evidence-support
signal, not a truth signal.  A true trajectory may be distant because its support
edge was not retrieved, and a false trajectory may be close because $R$ contains
a noisy edge.  Different evidence states generally induce different regions.

\subsection{Entropic aggregation over alternative paths}

Best-path selection is itself brittle when several explanations are plausible.
Let
\[
 e_\gamma(X_\rho)
 =\min_{a\in\Alignset(\rho,\gamma)}
 E_{\mathrm{path}}(X_\rho,\gamma,a;R)
\]
and let $\omega\in\Delta(\Pathset_q(R))$ be a path reliability or relevance
prior.  For $\tau>0$, define
\begin{equation}
 E_R^{\mathrm{path}}(\rho)
 =-\tau\log\sum_{\gamma\in\Pathset_q(R)}
 \omega_\gamma\exp\{-e_\gamma(X_\rho)/\tau\}.
 \label{eq:entropic-path-energy}
\end{equation}

\begin{proposition}[Variational path aggregation]
\label{prop:path-variational}
If $\Pathset_q(R)$ is finite and nonempty, $\tau>0$, and $e_\gamma(X)$ is finite
on the support of $\omega$, then
\begin{align}
 E_R^{\mathrm{path}}(\rho)
 =\min_{\nu\in\Delta(\Pathset_q(R)):\,\nu\ll\omega}
 \left\{
 \mathbb E_{\gamma\sim\nu}[e_\gamma(X_\rho)]
 +\tau\mathrm{KL}(\nu\|\omega)
 \right\}. 
 \label{eq:path-variational}
\end{align}
The unique optimizer satisfies
\[
 \nu_\rho^\star(\gamma)
 =\frac{\omega_\gamma e^{-e_\gamma(X_\rho)/\tau}}
 {\sum_{\gamma'}\omega_{\gamma'}e^{-e_{\gamma'}(X_\rho)/\tau}}.
\]
\end{proposition}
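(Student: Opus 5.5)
The plan is to prove this with the Gibbs variational principle, i.e., the Donsker--Varadhan identity for a finite reference measure. Write $P=\Pathset_q(R)$, $e_\gamma=e_\gamma(X_\rho)$, and define
\[
 Z=\sum_{\gamma\in P}\omega_\gamma e^{-e_\gamma/\tau}.
\]
Since $P$ is finite and nonempty, $\omega$ is a probability vector, and each $e_\gamma$ is finite on $\supp\omega$, we have $0<Z<\infty$. Hence $E_R^{\mathrm{path}}(\rho)=-\tau\log Z$ is a finite real number and the candidate $\nu_\rho^\star$ is well defined. Note that $\nu_\rho^\star\ll\omega$, and in fact the two have the same support.

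The key step is the following rewriting, valid for any feasible $\nu\ll\omega$. Sums run over $\supp\nu\subseteq\supp\omega$, so every $e_\gamma$ that appears is finite. Substitute
\[
 e_\gamma=-\tau\log\frac{\nu_\rho^\star(\gamma)}{\omega_\gamma}-\tau\log Z
\]
into the objective:
\[
 \mathbb E_\nu[e_\gamma]+\tau\mathrm{KL}(\nu\|\omega)
 =\tau\sum_{\gamma}\nu(\gamma)\log\frac{\nu(\gamma)}{\omega_\gamma}
 -\tau\sum_\gamma\nu(\gamma)\log\frac{\nu_\rho^\star(\gamma)}{\omega_\gamma}
 -\tau\log Z.
\]
The $\log\omega_\gamma$ terms cancel, and the right-hand side collapses to $\tau\mathrm{KL}(\nu\|\nu_\rho^\star)-\tau\log Z$. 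The expression $\mathrm{KL}(\nu\|\nu_\rho^\star)$ is meaningful because $\nu\ll\omega$ and $\nu_\rho^\star$ has the same support as $\omega$.

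The conclusion then follows from Gibbs' inequality: $\mathrm{KL}(\nu\|\nu_\rho^\star)\ge0$, with equality if and only if $\nu=\nu_\rho^\star$. Because $\tau>0$, the objective is bounded below by $-\tau\log Z=E_R^{\mathrm{path}}(\rho)$. This bound is attained at $\nu_\rho^\star$, which is feasible, so the minimum exists and equals $E_R^{\mathrm{path}}(\rho)$, as claimed in \eqref{eq:path-variational}. The strictness of Gibbs' inequality gives uniqueness of the minimizer.

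The only delicate point is bookkeeping with supports and conventions. Paths outside $\supp\omega$ may carry $e_\gamma=+\infty$, and the constraint $\nu\ll\omega$ removes them. Paths in $\supp\omega$ but outside $\supp\nu$ contribute nothing, by the convention $0\log0=0$. I would state these conventions explicitly at the start, so that the identity above holds term by term. Everything else is a routine finite computation.
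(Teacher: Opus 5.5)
Your proposal is correct and follows the paper's proof: both rewrite the objective, for every feasible $\nu\ll\omega$, as $\tau\mathrm{KL}(\nu\|\nu_\rho^\star)-\tau\log\sum_\gamma\omega_\gamma e^{-e_\gamma(X_\rho)/\tau}$, then use nonnegativity of KL (with equality only at $\nu=\nu_\rho^\star$) to obtain both the minimum value and uniqueness. Your explicit handling of supports and of the $0\log 0$ convention is a useful tightening that the paper leaves implicit.
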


\begin{proof}
For any $\nu\ll\omega$, expand
$\tau\mathrm{KL}(\nu\|\nu_\rho^\star)$.  Substitution of the normalizer gives
\[
 \mathbb E_\nu[e_\gamma(X_\rho)]
 +\tau\mathrm{KL}(\nu\|\omega)
 =\tau\mathrm{KL}(\nu\|\nu_\rho^\star)
 -\tau\log\sum_\gamma\omega_\gamma e^{-e_\gamma(X_\rho)/\tau}.
\]
Nonnegativity of KL divergence yields Eq.~(\ref{eq:path-variational}) and
uniqueness.
\end{proof}

The temperature $\tau$ measures path uncertainty.  As $\tau\downarrow0$, the
energy approaches the minimum over paths with positive prior; larger $\tau$
retains alternative explanations.  This avoids treating failure of one path as
falsity when alternatives remain.  If all admissible paths are absent,
Eq.~(\ref{eq:entropic-path-energy}) is still infinite and finite slack is needed.

\subsection{Running example}

For the Lou Seal question, an observed path can be written
\[
 \textsc{Lou Seal}
 \xrightarrow{\textsc{mascotOf}}
 \textsc{San Francisco Giants}
 \xrightarrow{\textsc{mostRecentWorldSeriesWin}}
 \textsc{2014 World Series}.
\]
Entity costs penalize drift to another mascot or team; typed residuals penalize
relation reversal; path aggregation allows alternative reliable provenance; and
the support region contains trajectories whose aligned energy is below a declared
tolerance.  None of these statements turns the retrieved path into an oracle of
truth.  They specify what the current evidence supports.

\section{The Limits of Hard Grounding under Incomplete Evidence}
\label{sec:hard}

Hard grounding conditions the trajectory prior on the evidence-relative set
$\Feas$.  When coverage is complete for the target relation, this can remove
inadmissible candidates without harming a correct one.  Under open-world
incompleteness, however, feasible-set membership cannot identify the truth of an
unsupported claim.

\subsection{An open-world impossibility}


\begin{theorem}[Non-identifiability of unsupported trajectories]
\label{thm:hard-impossibility}
Fix a query $q$, a candidate trajectory $\rho$, and an observation map
$\mathsf O_q$ from latent knowledge worlds to evidence states.  Let
$\mathsf T(W,\rho)\in\{0,1\}$ denote an external truth predicate for $\rho$ in
the latent world $W$, which is not observed by the grounding rule.  Suppose
there exist two worlds $W_1,W_2$ such that
\[
    \mathsf O_q(W_1)=\mathsf O_q(W_2)=R,
\]
while
\[
    \mathsf T(W_1,\rho)=1,\qquad
    \mathsf T(W_2,\rho)=0,
\]
and $\rho$ is unsupported but not explicitly contradicted in the common evidence
state $R$.

Let $h(q,R,\rho)\in\{0,1\}$ be any deterministic hard-grounding rule that is
measurable with respect to the observed information $(q,R,\rho)$, where
$1$ means retain and $0$ means reject.  Then $h$ cannot simultaneously satisfy
the following two requirements:
\[
\mathsf T(W,\rho)=1\ \text{and $\rho$ is unobserved}
    \Longrightarrow h(q,\mathsf O_q(W),\rho)=1,
\]
and
\[
\mathsf T(W,\rho)=0\ \text{and $\rho$ is unsupported}
    \Longrightarrow h(q,\mathsf O_q(W),\rho)=0.
\]
The same conclusion holds for randomized hard-grounding rules if the two
requirements are imposed almost surely.
\end{theorem}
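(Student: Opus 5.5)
The argument is a two-world indistinguishability argument: $h$ sees only $(q,R,\rho)$, so it must return the same verdict in $W_1$ and $W_2$, and the two requirements demand opposite verdicts there.

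\emph{Step 1.} I first fix the reading of the antecedents. Whether $\rho$ is ``unsupported'' (equivalently, ``unobserved'' in the first requirement) is an evidence status, so it is determined by the evidence state $\mathsf O_q(W)$ alone. By hypothesis, $\rho$ is unsupported and not contradicted in $R=\mathsf O_q(W_1)=\mathsf O_q(W_2)$. Hence both antecedent conditions on $\rho$ hold in both worlds. This is the step I expect to need the most care. The statement uses ``unobserved'' in one requirement and ``unsupported'' in the other, and I will read both as the evidence status of $\rho$ in $R$ from Section~\ref{sec:setup}. Under that reading the status is a function of $R$ and therefore identical in $W_1$ and $W_2$.

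\emph{Step 2.} Next I apply the requirements. In $W_1$ we have $\mathsf T(W_1,\rho)=1$ and $\rho$ is unobserved, so the first requirement forces $h(q,R,\rho)=1$. In $W_2$ we have $\mathsf T(W_2,\rho)=0$ and $\rho$ is unsupported, so the second requirement forces $h(q,R,\rho)=0$. Since $h$ is measurable with respect to $(q,R,\rho)$ only (evidence relativity, Axiom~1), it is one fixed value on the common argument $(q,R,\rho)$. That value cannot be both $0$ and $1$, which is a contradiction.

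\emph{Step 3.} For a randomized rule, I take $h(q,R,\rho,U)$ with auxiliary randomness $U$ whose law depends only on the observed information. Then the retention probability $r=\Pr(h=1\mid q,R,\rho)$ is the same number in both worlds. Imposing the first requirement almost surely in $W_1$ gives $r=1$. Imposing the second almost surely in $W_2$ gives $r=0$. This is again a contradiction. A one-sentence remark will note the quantitative version: for any such rule, the two error probabilities sum to at least $1$ on this pair of worlds.
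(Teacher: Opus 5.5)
Your proposal is correct and follows essentially the same route as the paper's proof: the common observed input $(q,R,\rho)$ forces a single decision, or a single retention law in the randomized case, and one of the two requirements must then be violated in $W_1$ or $W_2$. Your Step 1, which reads ``unobserved'' and ``unsupported'' as the evidence status of $\rho$ in $R$, makes explicit what the paper's proof uses implicitly. Your quantitative remark is also right: the two error probabilities are $1-r$ and $r$, so they sum to exactly $1$.
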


\begin{proof}
The two latent worlds induce the same observable input:
\[
    (q,\mathsf O_q(W_1),\rho)=(q,\mathsf O_q(W_2),\rho)=(q,R,\rho).
\]
Since $h$ uses no information beyond this observed input, it must return the
same decision in both worlds.  If this common decision is $1$, then the rule
retains $\rho$ in $W_2$, where $\rho$ is false and unsupported.  Hence it fails
to reject every false unsupported trajectory.  If the common decision is $0$,
then the rule rejects $\rho$ in $W_1$, where $\rho$ is true but unobserved.
Hence it fails to retain every true-but-unobserved trajectory.  Therefore the
two requirements cannot both hold.

For a randomized rule, let $H(q,R,\rho)$ be the Bernoulli decision generated
from the observed input.  Because the conditional distribution of
$H(q,R,\rho)$ is identical in $W_1$ and $W_2$, it cannot satisfy
$H=1$ almost surely in $W_1$ and $H=0$ almost surely in $W_2$.  Thus the
almost-sure versions of the two requirements are also incompatible.
\end{proof}

For the running example, let $R$ contain
$\textsc{Lou Seal}\xrightarrow{\textsc{mascotOf}}\textsc{Giants}$ but omit the
championship edge.  In $W_1$, the latent world contains the missing 2014 edge; in
$W_2$, it does not or points elsewhere.  The observed evidence is identical:
both worlds expose the mascot edge and hide the answer-bearing edge.  Rejecting
the 2014 trajectory is wrong in $W_1$, whereas accepting it as established fact
is wrong in $W_2$.  The defensible evidence label is therefore
``unsupported,'' followed by verification, renewed retrieval, or abstention.
Finite slack preserves the candidate for such follow-up actions; it does not
certify truth.  Figure~\ref{fig:lou-seal-nonidentifiability} visualizes this
two-world indistinguishability.

\begin{figure}[!htbp]
\centering
\IfFileExists{Figure/Example.png}{%
    \includegraphics[width=0.85\linewidth]{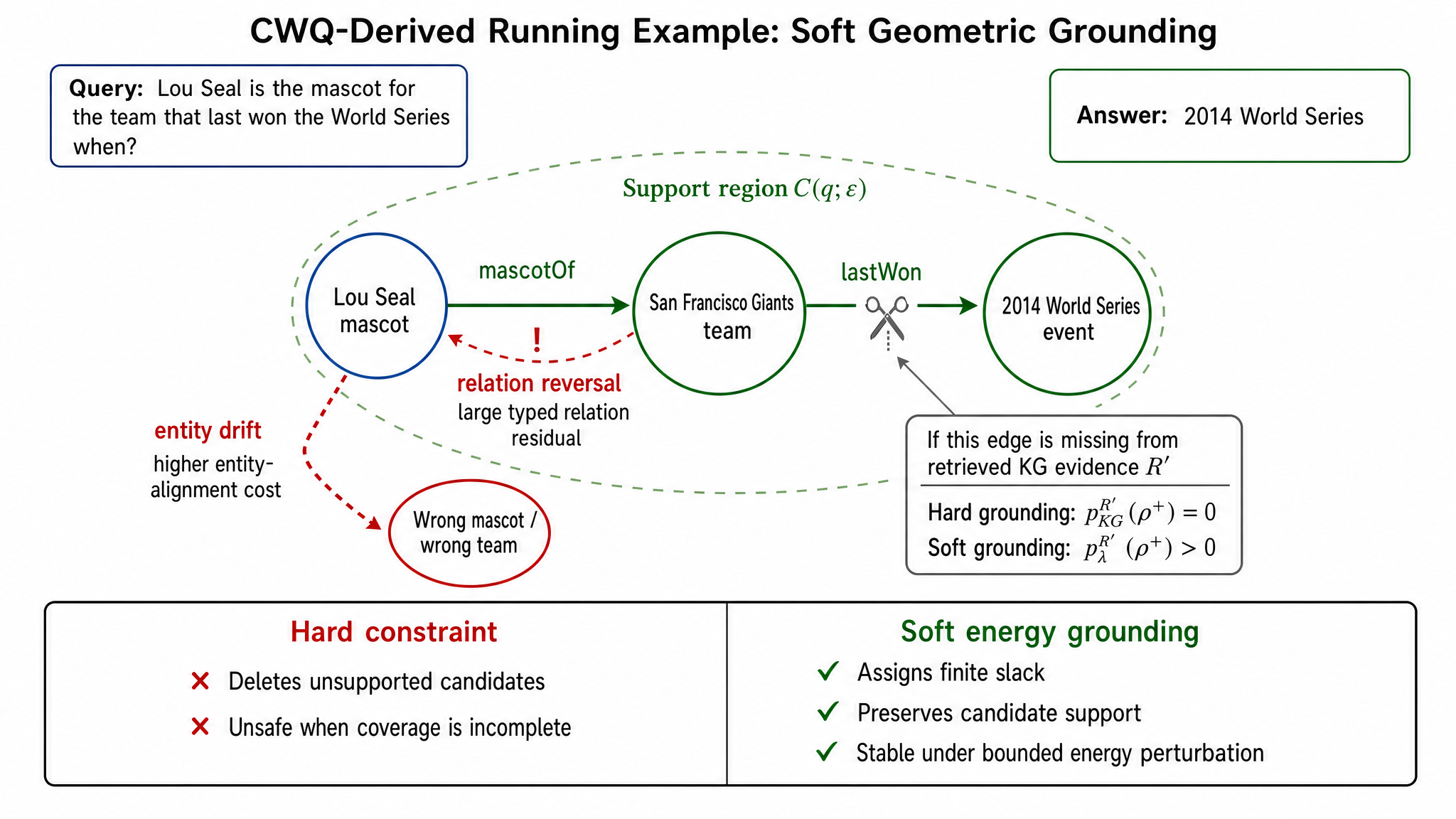}%
}{%
    \fbox{\parbox{0.92\linewidth}{\centering
    Open-world non-identifiability in the Lou Seal example.}}%
}
\caption{Lou Seal non-identifiability: two latent worlds induce the same
observed evidence state while disagreeing on the missing championship edge.
Finite slack preserves the candidate for renewed retrieval, verification, or
abstention without certifying truth.}
\label{fig:lou-seal-nonidentifiability}
\end{figure}
\FloatBarrier

The obstruction is informational rather than algorithmic: no stronger search
procedure or more expressive embedding can recover a distinction erased by the
observation state.

This limitation leaves room for hard constraints when their premises are met:
the relevant relation may be certified closed-world, the snapshot may be
schema-complete for the task, an external oracle may be consulted, or the
constraint may enforce syntax or safety rather than epistemic truth.  These are
applicability conditions, not technical footnotes.

\subsection{Support collapse and coverage}

\begin{proposition}[Hard support collapse]
\label{prop:hard-collapse}
Let $p_\theta(\rho^\dagger\mid q)>0$ for a trajectory
$\rho^\dagger\notin\Feas$.  Whenever the hard posterior in
Eq.~(\ref{eq:hard-posterior}) is defined,
\[
 p_H^R(\rho^\dagger\mid q)=0.
\]
If a finite energy satisfies $E_R(\rho^\dagger)<\infty$, then for every finite
$\lambda\ge0$, the soft posterior in Eq.~(\ref{eq:soft-posterior}) satisfies
$p_\lambda^R(\rho^\dagger\mid q)>0$.
\end{proposition}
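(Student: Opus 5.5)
The plan is to argue each claim directly from the defining formulas, Eq.~(\ref{eq:hard-posterior}) and Eq.~(\ref{eq:soft-posterior}); no earlier result beyond these definitions is needed.

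For the hard posterior, assume it is defined, that is, $p_\theta(\Feas\mid q)>0$. Then evaluate Eq.~(\ref{eq:hard-posterior}) at $\rho^\dagger$. Since $\rho^\dagger\notin\Feas$, the indicator $\mathbf 1\{\rho^\dagger\in\Feas\}$ vanishes, so the numerator is $p_\theta(\rho^\dagger\mid q)\cdot 0=0$. The denominator is strictly positive, hence $p_H^R(\rho^\dagger\mid q)=0$. This holds regardless of how large $p_\theta(\rho^\dagger\mid q)$ is, which is exactly the collapse phenomenon.

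For the soft posterior, fix a finite $\lambda\ge0$. I take the standing assumption of Eq.~(\ref{eq:soft-posterior}) that $E_R$ is finite on $\Cand(q)$, or at least on $\supp p_\theta$. The numerator $p_\theta(\rho^\dagger\mid q)e^{-\lambda E_R(\rho^\dagger)}$ is a product of a positive prior and a positive exponential, because $\lambda E_R(\rho^\dagger)$ is a finite real number. For the normalizer, $Z_R(\lambda)$ is a finite sum of nonnegative terms because $\Cand(q)$ is finite. It is finite, since each term with positive prior has finite energy; terms with zero prior contribute zero under the convention $0\cdot e^{-\lambda E}=0$. It is strictly positive, since it contains the positive $\rho^\dagger$ term. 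The ratio is therefore well defined and strictly positive.

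The only point requiring care is this well-definedness. If the energy were allowed to be $+\infty$ on some other candidates, those terms contribute $0$ for $\lambda>0$, and for $\lambda=0$ one uses $e^{0}=1$. Energies equal to $-\infty$ are excluded by the codomain $\mathbb R\cup\{+\infty\}$, so $Z_R(\lambda)<\infty$ and the conclusion is unaffected. I would note this explicitly so the statement holds exactly under the hypothesis $E_R(\rho^\dagger)<\infty$.
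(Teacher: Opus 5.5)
Your proposal is correct and follows the paper's proof. The vanishing indicator zeroes the hard numerator over a positive denominator, and in the soft case a strictly positive numerator over a positive, finite normalizer (finite candidate set) is positive; your extra remark on other candidates with $+\infty$ energy and the $0\cdot\infty$ convention at $\lambda=0$ is a harmless refinement the paper leaves implicit.
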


\begin{proof}
Hard conditioning multiplies the prior mass by
$\mathbf 1\{\rho^\dagger\in\Feas\}=0$.  Under finite energy, the soft numerator
$p_\theta(\rho^\dagger\mid q)e^{-\lambda E_R(\rho^\dagger)}$ is strictly
positive, and the finite candidate set has a positive finite normalizer.
\end{proof}

Finite soft mass is not endorsement.  It merely prevents missing evidence from
irreversibly deleting a candidate that may require additional information.

Let $\mathcal P^\dagger(q)$ be a declared family of sufficient support paths for a
correct trajectory $\rho^\dagger$.  Define
\begin{equation}
 \operatorname{Cov}(q)
 =\Pr_R[\exists\gamma\in\mathcal P^\dagger(q):\gamma\subseteq R].
 \label{eq:path-coverage}
\end{equation}

\begin{proposition}[Coverage bottleneck]
\label{prop:coverage-bottleneck}
Suppose a hard path-complete rule can retain $\rho^\dagger$ only if $R$ contains
at least one path in $\mathcal P^\dagger(q)$.  Then
\[
 \Pr_R[\rho^\dagger\in\Feas]
 \le \operatorname{Cov}(q).
\]
For one required $k$-edge path whose edges are retrieved independently with
probabilities $c_1,\ldots,c_k$, its coverage is $\prod_{i=1}^{k}c_i$.
\end{proposition}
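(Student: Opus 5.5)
The first claim is an event inclusion followed by monotonicity of probability. Let $A=\{\rho^\dagger\in\Feas\}$, the event over the randomness of $R\sim\mathcal O(\Gstar,q)$ that the hard rule retains the correct trajectory. Let $B=\{\exists\gamma\in\mathcal P^\dagger(q):\gamma\subseteq R\}$ be the coverage event whose probability defines $\operatorname{Cov}(q)$ in Eq.~(\ref{eq:path-coverage}). The hypothesis that the hard path-complete rule can retain $\rho^\dagger$ only if $R$ contains a sufficient support path says exactly that $A\subseteq B$, pointwise in the evidence state. Once both are recognized as events on the same probability space (measurability of $\Feas$-membership in $R$ is implicit in the statement), monotonicity gives $\Pr_R[A]\le\Pr_R[B]=\operatorname{Cov}(q)$.

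The step that needs the most care is this translation of the informal retention condition into the inclusion $A\subseteq B$. For the energies used in Section~\ref{sec:geometry}, I would justify it as follows. If no path of $\mathcal P^\dagger(q)$ lies in $R$, then the raw best-path energy in Eq.~(\ref{eq:min-path-energy}) evaluated on sufficient paths is $+\infty$, or at least exceeds the tolerance $\epsilon$. Hence $\rho^\dagger\notin\Feas(q;\epsilon)$. This is precisely what ``path-complete'' asserts, so I will state it as the interpretation of the hypothesis rather than derive it from a particular energy.

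For the second claim, take $\mathcal P^\dagger(q)=\{\gamma\}$ with $\gamma$ having edges $e_1,\dots,e_k$. Then $B=\bigcap_{i=1}^k\{e_i\in R\}$. Independence of the retrieval events gives $\Pr_R[B]=\prod_{i=1}^k\Pr[e_i\in R]=\prod_{i=1}^k c_i$. I would also note the consequence: coverage decays geometrically in path length, e.g. it is at most $c^k$ when all $c_i\le c$. This is the bottleneck the proposition is meant to exhibit.
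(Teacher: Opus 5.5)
Your proof is correct and follows the paper's: retention is contained in the event that some sufficient path is observed, so monotonicity gives the bound, and the product formula is independence applied to the single-path event $\bigcap_{i}\{e_i\in R\}$. One caution on your aside: the inclusion does not follow from Eq.~(\ref{eq:min-path-energy}) alone, because that minimum ranges over all retrieved paths and not only sufficient ones. Treating the inclusion as the content of the path-complete hypothesis, as you ultimately do, is the right move.
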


\begin{proof}
The retention event is a subset of the event that at least one sufficient path
is observed.  The product formula follows only under the stated independence
assumption.
\end{proof}

The phrase ``at least one'' is important: bounding retention by coverage of one
named path is invalid when alternative sufficient paths exist.  Real failures
are also correlated.  Missing a seed entity can remove a neighborhood;
missing-not-at-random evidence can suppress rare relation types; temporal
snapshots can be stale; and schema incompleteness can make a claim inexpressible.
Coverage is therefore a theoretical bottleneck of hard epistemic grounding, not
merely a retrieval implementation detail.

\paragraph{Counterexample: a correct answer with a missing edge.}
If the observed Lou Seal subgraph omits only the Giants--2014 edge, a hard rule
removes the correct two-hop trajectory.  A finite-slack posterior assigns it a
penalty rather than a zero probability and can route it to renewed retrieval.
This is precisely the support-collapse mechanism; it does not license answering
from every unsupported trajectory.

\section{Soft Grounding as Variational Posterior Deformation}
\label{sec:variational}

Hard support regions are useful for stating what the observed graph directly
licenses, but they are too brittle as a full account of KG-enhanced reasoning.
The language model already induces a prior distribution over possible
trajectories, and incomplete graph evidence normally changes this distribution
rather than replacing it by a zero-one filter.  We therefore treat grounding as a
posterior deformation problem: trajectories with lower evidence energy become
more likely, while finite-energy alternatives remain in the support unless the
method explicitly takes a hard-limit regime.

Let $P_0$ denote the baseline probability measure induced by the LLM prior
$p_\theta$; equivalently, $P_0(\rho\mid q)=p_\theta(\rho\mid q)$ on $\Cand(q)$. Let
$E_R(\rho)\in[0,\infty)$ be an evidence energy.  For $\lambda\ge0$, consider
\begin{equation}
 P_\lambda^R
 \in
 \arg\min_{P\ll P_0}
 \left\{
 \mathrm{KL}(P\|P_0)
 +\lambda\,\mathbb E_{\rho\sim P}[E_R(\rho)]
 \right\}.
 \label{eq:kl-grounding-objective}
\end{equation}
The absolute-continuity constraint records a modest but important choice: the KG
evidence reweights trajectories available to the model prior; it does not invent
new trajectories outside the modeled candidate space.

\begin{theorem}[KL-regularized characterization]
\label{thm:variational-soft}
Assume $\Cand(q)$ is finite or countable, $P_0(\rho\mid q)>0$ on its support,
$E_R$ is finite on that support, and
\[
 Z_R(\lambda)
 =
 \sum_{\rho}
 P_0(\rho\mid q)\exp\{-\lambda E_R(\rho)\}<\infty .
\]
For $\lambda\ge0$, the optimization problem in
Eq.~(\ref{eq:kl-grounding-objective}) has the unique solution
\begin{equation}
 P_\lambda^R(\rho\mid q)
 =
 \frac{
 P_0(\rho\mid q)\exp\{-\lambda E_R(\rho)\}
 }{
 Z_R(\lambda)
 } .
 \label{eq:gibbs-grounded-posterior}
\end{equation}
Moreover, for every $P\ll P_0$,
\begin{equation}
 \mathrm{KL}(P\|P_0)
 +\lambda\,\mathbb E_P[E_R]
 =
 \mathrm{KL}(P\|P_\lambda^R)
 -\log Z_R(\lambda).
 \label{eq:kl-decomposition}
\end{equation}
\end{theorem}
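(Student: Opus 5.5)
The plan mirrors the proof of Proposition~\ref{prop:path-variational}: establish the identity in Eq.~(\ref{eq:kl-decomposition}) first, and read off existence and uniqueness of the minimizer from nonnegativity of KL divergence.

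\emph{Step 1: the Gibbs measure is a probability measure.} Since $E_R\ge0$ is finite on the support, each weight $P_0(\rho\mid q)e^{-\lambda E_R(\rho)}$ is strictly positive there. Hence $0<Z_R(\lambda)<\infty$; in fact $Z_R(\lambda)\le1$ because $E_R\ge0$. So $P_\lambda^R$ is a well-defined probability measure on $\Cand(q)$. It is mutually absolutely continuous with $P_0$, with log-density
\[
\log\frac{dP_\lambda^R}{dP_0}(\rho)=-\lambda E_R(\rho)-\log Z_R(\lambda).
\]

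\emph{Step 2: the decomposition.} Fix $P\ll P_0$. Because $P_\lambda^R$ and $P_0$ share the same support, we also have $P\ll P_\lambda^R$. Pointwise on $\supp P$,
\[
\log\frac{P}{P_0}=\log\frac{P}{P_\lambda^R}-\lambda E_R-\log Z_R(\lambda).
\]
Take expectations under $P$ to get Eq.~(\ref{eq:kl-decomposition}).

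\emph{Step 3: optimality and uniqueness.} Since $-\log Z_R(\lambda)$ does not depend on $P$, the objective in Eq.~(\ref{eq:kl-grounding-objective}) equals $\mathrm{KL}(P\|P_\lambda^R)$ plus a constant. This quantity is minimized exactly when $\mathrm{KL}(P\|P_\lambda^R)=0$, i.e.\ when $P=P_\lambda^R$. That gives existence and uniqueness, and the minimal value is $-\log Z_R(\lambda)$.

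\emph{Main obstacle: the countable case.} For finite $\Cand(q)$, Step 2 is a finite sum and nothing more is needed. For countable $\Cand(q)$, splitting the expectation requires care because the series for $\mathrm{KL}(P\|P_0)$ and for $\mathbb E_P[E_R]$ may diverge. I would handle this as follows.
\begin{enumerate}
\item Work under the convention that the objective equals $+\infty$ whenever either term is infinite.
\item Note that $\mathbb E_P[E_R]\in[0,\infty]$ is well defined because $E_R\ge0$.
\item Note that $-\log Z_R(\lambda)$ is a finite constant.
\item Write each KL divergence as a sum of nonnegative terms, using $x\log x-x+1\ge0$, so that both sides of the identity are sums of nonnegative series plus constants.
\end{enumerate}
The identity then holds in $[0,\infty]$: either both sides are infinite, or both are finite and equal by rearranging absolutely convergent series. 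This only matters for the countable extension; in the paper's standing finite setting the argument is elementary.
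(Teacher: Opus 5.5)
Your proposal is correct and follows the paper's own argument: you write the pointwise log-ratio identity relating $P$, $P_0$ and $P_\lambda^R$, take the expectation under $P$ to get Eq.~(\ref{eq:kl-decomposition}), and then use nonnegativity of $\mathrm{KL}(P\|P_\lambda^R)$, with equality only at $P=P_\lambda^R$, for existence and uniqueness. Your extra care for countable $\Cand(q)$ is valid. Writing each KL as a nonnegative series via $x\log x-x+1\ge0$ makes the identity hold in $[0,\infty]$, which justifies a step the paper covers only with ``taking expectation and rearranging.''
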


\begin{proof}
For every $\rho$ in the support of $P_0$,
\[
 \log\frac{P(\rho)}{P_\lambda^R(\rho)}
 =
 \log\frac{P(\rho)}{P_0(\rho)}
 +\lambda E_R(\rho)
 +\log Z_R(\lambda).
\]
Taking expectation under $P$ and rearranging yields
Eq.~(\ref{eq:kl-decomposition}).  Since KL divergence is nonnegative and
vanishes only when $P=P_\lambda^R$, the minimizer is unique.
\end{proof}

This gives soft grounding the same minimum-information interpretation as
posterior regularization: among all distributions that pay for evidence
incompatibility, the selected posterior moves as little as possible from the LLM
prior in KL geometry.  The parameter $\lambda$ is a grounding-strength parameter.
When $\lambda=0$, the posterior reduces to the LLM prior; larger $\lambda$ gives
greater influence to the evidence energy.

Finite slack is the technical bridge between graph absence and open-world
uncertainty.  When no retrieved path supports a trajectory, assigning infinite
energy forces immediate rejection.  A slackened energy instead records the cost
of unsupportedness without turning absence into contradiction:
\begin{equation}
 E_R^{\mathrm{slack}}(\rho)
 =\min\{E_R^{\min}(\rho),\kappa_R(\rho)\},
 \qquad 0<\kappa_R(\rho)<\infty .
 \label{eq:finite-slack-energy}
\end{equation}
Here $\kappa_R$ may depend on retrieval coverage, entity-linking confidence,
schema fit, temporal freshness, or application-level tolerance.  It is not a
claim that unsupported trajectories are correct; it only prevents the posterior
from confusing lack of observed support with logical falsity.

\begin{proposition}[Minimum-energy concentration]
\label{prop:min-energy-limit}
Let $\Cand(q)$ be finite and suppose $P_0(\rho\mid q)>0$ for all candidates.
For fixed finite energy $E_R$, the posterior $P_\lambda^R$ concentrates, as
$\lambda\to\infty$, on the set
\[
 M_R=\arg\min_{\rho\in\Cand(q)}E_R(\rho).
\]
Moreover, for every $\rho\in M_R$,
\[
 P_\lambda^R(\rho\mid q)
 \to
 \frac{P_0(\rho\mid q)}{P_0(M_R\mid q)} .
\]
\end{proposition}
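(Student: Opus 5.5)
The plan is to work directly from the Gibbs form of Theorem~\ref{thm:variational-soft}, since $\Cand(q)$ is finite and $E_R$ is finite. We factor out the minimum energy and take termwise limits in a finite sum. Let $E^\ast=\min_{\rho\in\Cand(q)}E_R(\rho)$, which is attained because the candidate set is finite. Multiplying numerator and denominator of Eq.~(\ref{eq:gibbs-grounded-posterior}) by $e^{\lambda E^\ast}$ gives
\[
 P_\lambda^R(\rho\mid q)=\frac{P_0(\rho\mid q)\,e^{-\lambda(E_R(\rho)-E^\ast)}}{P_0(M_R\mid q)+\sum_{\rho'\notin M_R}P_0(\rho'\mid q)\,e^{-\lambda(E_R(\rho')-E^\ast)}}.
\]
This normalization is the only real step. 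Every exponent is now nonpositive, and it vanishes exactly on $M_R$.

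Next we introduce the energy gap $\Delta=\min_{\rho'\notin M_R}\{E_R(\rho')-E^\ast\}$. It is strictly positive whenever $M_R\ne\Cand(q)$, again because the minimum is over a finite set. If $M_R=\Cand(q)$, the energy is constant and $P_\lambda^R=P_0$ for every $\lambda$, so both claims hold trivially. Otherwise the tail sum in the denominator is bounded by $(1-P_0(M_R\mid q))e^{-\lambda\Delta}$, which tends to $0$. Since $P_0(M_R\mid q)>0$ by the positivity assumption, the denominator converges to $P_0(M_R\mid q)$.

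The two conclusions then follow. For $\rho\in M_R$ the numerator is exactly $P_0(\rho\mid q)$, so $P_\lambda^R(\rho\mid q)\to P_0(\rho\mid q)/P_0(M_R\mid q)$. For $\rho\notin M_R$ the numerator is at most $P_0(\rho\mid q)e^{-\lambda\Delta}$. This gives the explicit concentration bound
\[
 P_\lambda^R(\Cand(q)\setminus M_R\mid q)\le\frac{1-P_0(M_R\mid q)}{P_0(M_R\mid q)}\,e^{-\lambda\Delta}\to0,
\]
which is the meaning of ``concentrates on $M_R$.'' Equivalently, $P_\lambda^R$ converges to the prior conditioned on $M_R$.

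There is no serious obstacle. The one point needing care is the strict positivity of $\Delta$, which uses finiteness of $\Cand(q)$; with countably many candidates the infimum gap could be zero. It is also worth remarking that this limit is hard conditioning on the argmin set, not on $\Feas$, which connects the result to the infinite-penalty interpretation.
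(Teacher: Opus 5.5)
Your proof is correct and takes essentially the same approach as the paper: both rescale the Gibbs weights by $e^{\lambda E^\ast}$ (the paper divides by $e^{-\lambda E_{\min}}$), so minimizers keep their prior weights while non-minimizers decay. Your explicit gap $\Delta$ and the resulting bound $\frac{1-P_0(M_R\mid q)}{P_0(M_R\mid q)}e^{-\lambda\Delta}$ simply make quantitative the termwise limit that the paper leaves implicit.
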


\begin{proof}
Let $E_{\min}=\min_\rho E_R(\rho)$ and write
$\Delta_\rho=E_R(\rho)-E_{\min}\ge0$.  Dividing numerator and denominator in
Eq.~(\ref{eq:gibbs-grounded-posterior}) by $e^{-\lambda E_{\min}}$ gives weights
proportional to $P_0(\rho\mid q)e^{-\lambda\Delta_\rho}$.  Non-minimizers have
$\Delta_\rho>0$ and hence relative weights $e^{-\lambda\Delta_\rho}\to0$, while
minimizers retain their prior weights.
\end{proof}



\begin{proposition}[Hard conditioning as an infinite-penalty limit]
\label{prop:hard-limit}
Let
\[
S_R=\{\rho\in\Gamma(q):E_R(\rho)\le\epsilon\},
\]
and define the truncated energy
\[
E_M(\rho)=
\begin{cases}
0, & \rho\in S_R,\\
M, & \rho\notin S_R.
\end{cases}
\]
If $P_0(S_R\mid q)>0$, then the soft posterior induced by $E_M$ satisfies
\[
P_{\lambda,M}^R(\rho\mid q)
\longrightarrow
\frac{
P_0(\rho\mid q)\mathbf 1\{\rho\in S_R\}
}{
P_0(S_R\mid q)
}
\]
whenever $\lambda M\rightarrow\infty$.
\end{proposition}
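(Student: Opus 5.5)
The plan is a direct computation with the Gibbs form of Theorem~\ref{thm:variational-soft}, applied to the two-valued energy $E_M$. Since $E_M$ takes only the values $0$ and $M$, it is finite on $\Cand(q)$. The normalizer is
\[
Z_R(\lambda)=P_0(S_R\mid q)+e^{-\lambda M}\bigl(1-P_0(S_R\mid q)\bigr),
\]
which is finite and at least $P_0(S_R\mid q)>0$. Eq.~(\ref{eq:gibbs-grounded-posterior}) then gives the posterior explicitly:
\[
P_{\lambda,M}^R(\rho\mid q)=\frac{P_0(\rho\mid q)\bigl(\mathbf 1\{\rho\in S_R\}+e^{-\lambda M}\mathbf 1\{\rho\notin S_R\}\bigr)}{P_0(S_R\mid q)+e^{-\lambda M}\bigl(1-P_0(S_R\mid q)\bigr)}.
\]

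Next I split into the two cases for $\rho$ and use only that $e^{-\lambda M}\to0$ when $\lambda M\to\infty$. For $\rho\in S_R$, the numerator stays fixed at $P_0(\rho\mid q)$ and the denominator tends to $P_0(S_R\mid q)$, so the posterior converges to the hard posterior of Eq.~(\ref{eq:hard-posterior}). For $\rho\notin S_R$, the numerator is at most $e^{-\lambda M}$ and the denominator is at least $P_0(S_R\mid q)$, so the posterior tends to $0$.

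The only step needing care is the meaning of the limit. Convergence should depend on $(\lambda,M)$ only through the product $\lambda M$, and along any joint path with $\lambda M\to\infty$; a limit in $\lambda$ alone at fixed $M$ is not enough. The explicit formula settles this, because the posterior is a continuous function of the single quantity $t=e^{-\lambda M}\in(0,1]$, and $t\to0$.

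This continuity also gives uniform convergence over the finite candidate set, and hence convergence in total variation. For an explicit rate, the total-variation distance to the hard posterior is bounded by $e^{-\lambda M}\bigl(1-P_0(S_R\mid q)\bigr)/P_0(S_R\mid q)$. I would note this bound without relying on it.

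This argument does not need Proposition~\ref{prop:min-energy-limit}, although for fixed $M>0$ it recovers that result's conclusion, since $\arg\min E_M=S_R$ whenever $S_R\neq\emptyset$. The hypothesis $P_0(S_R\mid q)>0$ is used only to keep the denominator bounded away from zero, and it is the same condition under which the hard posterior is defined.
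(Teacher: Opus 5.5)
Your proposal is correct and follows the paper's proof. Both arguments write the Gibbs posterior for the two-valued energy, observe that candidates in $S_R$ keep their prior weight while those outside are scaled by $e^{-\lambda M}$, and use $P_0(S_R\mid q)>0$ to send the normalizer to $P_0(S_R\mid q)$; your explicit normalizer and total-variation rate $e^{-\lambda M}(1-P_0(S_R\mid q))/P_0(S_R\mid q)$ are a valid sharpening of that same argument.
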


\begin{proof}
Under the truncated energy $E_M$, every trajectory in $S_R$ retains its prior
weight, whereas every trajectory outside $S_R$ is multiplied by the factor
$e^{-\lambda M}$. Since $P_0(S_R\mid q)>0$, the normalizing constant converges
to $P_0(S_R\mid q)$ as $\lambda M\to\infty$. Therefore,
\[
P_{\lambda,M}^R(\rho\mid q)
\longrightarrow
\frac{
P_0(\rho\mid q)\mathbf1\{\rho\in S_R\}
}{
P_0(S_R\mid q)
},
\]
which is precisely the prior conditioned on the feasible set $S_R$.
\end{proof}

Hard grounding is therefore a boundary case obtained when unsupported
trajectories receive unbounded effective penalty.  In open-world graphs this
boundary is defensible only when completeness assumptions are explicit.

\begin{proposition}[Coverage-and-margin condition for soft MAP selection]
\label{prop:coverage-margin-map}
Let $\rho^\dagger\in\Cand(q)$ be a correct answer-bearing trajectory with
$P_0(\rho^\dagger\mid q)>0$, where correctness is defined by an external task
criterion rather than by the evidence energy $E_R$. Let $R$ be random.
Suppose there exist an evidence event
\[
M\subseteq\mathcal R
\]
and a constant $\Delta>0$ such that, on the event $M$,
\[
E_R(\rho)-E_R(\rho^\dagger)\ge\Delta
\]
for every competing trajectory
$\rho\neq\rho^\dagger$
with
$P_0(\rho\mid q)>0$.
If
\[
\lambda\Delta
>
\max_{\rho\neq\rho^\dagger:\,
P_0(\rho\mid q)>0}
\log
\frac{P_0(\rho\mid q)}
     {P_0(\rho^\dagger\mid q)},
\]
then $\rho^\dagger$ is the unique MAP trajectory of
$P_\lambda^R(\cdot\mid q)$.
Consequently,
\[
\Pr_R
\!\left[
\rho^\dagger
=
\arg\max_{\rho\in\Cand(q)}
P_\lambda^R(\rho\mid q)
\right]
\ge
\Pr_R(M).
\]

Furthermore, suppose a declared sufficient support path for
$\rho^\dagger$
contains $k$ required edges, each of which is retrieved independently with
probability at least $1-\epsilon$. If successful retrieval of all these edges
implies the event $M$, then
\[
\Pr_R(M)\ge(1-\epsilon)^k.
\]
\end{proposition}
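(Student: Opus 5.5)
The plan is to work pointwise on a fixed realization $R\in M$ and compare unnormalized Gibbs weights, then lift the pointwise conclusion to probabilities. By Theorem~\ref{thm:variational-soft}, on the finite candidate set the soft posterior has the form in Eq.~(\ref{eq:gibbs-grounded-posterior}). The normalizer $Z_R(\lambda)$ is common to every candidate and strictly positive, so comparing $P_\lambda^R(\rho\mid q)$ with $P_\lambda^R(\rho^\dagger\mid q)$ reduces to comparing $P_0(\rho\mid q)e^{-\lambda E_R(\rho)}$ with $P_0(\rho^\dagger\mid q)e^{-\lambda E_R(\rho^\dagger)}$. Candidates with $P_0(\rho\mid q)=0$ receive zero posterior mass. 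Since $P_0(\rho^\dagger\mid q)>0$ and $E_R(\rho^\dagger)<\infty$, the mass of $\rho^\dagger$ is positive, so those candidates can never tie with or beat $\rho^\dagger$.

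For each competitor $\rho\neq\rho^\dagger$ with positive prior, I will take the log-ratio
\[
 \log\frac{P_\lambda^R(\rho\mid q)}{P_\lambda^R(\rho^\dagger\mid q)}
 =\log\frac{P_0(\rho\mid q)}{P_0(\rho^\dagger\mid q)}-\lambda\bigl(E_R(\rho)-E_R(\rho^\dagger)\bigr).
\]
On $M$ the energy gap is at least $\Delta$, so the log-ratio is at most $\log\bigl(P_0(\rho\mid q)/P_0(\rho^\dagger\mid q)\bigr)-\lambda\Delta$. The hypothesis on $\lambda\Delta$ exceeds the maximum of these prior log-ratios, so this upper bound is strictly negative. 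The maximum is taken over a finite set, which makes it well defined; the degenerate case of no competitors is trivial. Hence $\rho^\dagger$ strictly dominates every other candidate, and so it is the unique MAP trajectory.

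The probability statement then follows by event inclusion. Because $\lambda$, $\Delta$ and $P_0$ are deterministic, the pointwise argument gives $M\subseteq\{\rho^\dagger=\arg\max_\rho P_\lambda^R(\rho\mid q)\}$, and therefore $\Pr_R$ of the MAP event is at least $\Pr_R(M)$. I expect the main obstacle to be measurability rather than any inequality. The argmax event must be an event of the evidence $\sigma$-algebra, which requires $R\mapsto E_R(\rho)$ to be measurable for each $\rho$. I would state this as an implicit standing assumption; alternatively, the bound can be read with an inner probability, since $M$ itself is assumed to be an event.

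For the final claim, let $A_i$ be the event that the $i$-th required edge is retrieved. By independence, $\Pr_R(\bigcap_i A_i)=\prod_i\Pr(A_i)\ge(1-\epsilon)^k$. The hypothesis $\bigcap_i A_i\subseteq M$ together with monotonicity of probability then gives $\Pr_R(M)\ge(1-\epsilon)^k$, exactly as in the product formula of Proposition~\ref{prop:coverage-bottleneck}.
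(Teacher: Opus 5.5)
Your proposal is correct and follows essentially the same route as the paper: a pointwise log-odds comparison on $M$ using the Gibbs form, separate handling of zero-prior candidates and the no-competitor case, the event inclusion $M\subseteq\{\rho^\dagger \text{ is the unique MAP}\}$, and the independence product giving $\Pr_R(M)\ge(1-\epsilon)^k$. Your remark on the measurability of the argmax event is a reasonable refinement that the paper leaves implicit.
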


\begin{proof}
If no competing trajectory has positive prior mass, then
$\rho^\dagger$ is trivially the unique MAP trajectory.

Otherwise, on the event $M$, every competing trajectory
$\rho$
with positive prior mass satisfies
\[
\log
\frac{P_\lambda^R(\rho\mid q)}
     {P_\lambda^R(\rho^\dagger\mid q)}
=
\log
\frac{P_0(\rho\mid q)}
     {P_0(\rho^\dagger\mid q)}
-
\lambda
\bigl(
E_R(\rho)-E_R(\rho^\dagger)
\bigr)
<0,
\]
where the strict inequality follows from the assumed margin condition.
Hence every positive-prior competitor has strictly smaller posterior mass than
$\rho^\dagger$. Since trajectories with zero prior mass also have zero posterior
mass under KL-regularized grounding, $\rho^\dagger$ is the unique MAP
trajectory.

Therefore,
\[
M
\subseteq
\left\{
\rho^\dagger
=
\arg\max_{\rho\in\Cand(q)}
P_\lambda^R(\rho\mid q)
\right\},
\]
which immediately yields
\[
\Pr_R
\!\left[
\rho^\dagger
=
\arg\max_{\rho\in\Cand(q)}
P_\lambda^R(\rho\mid q)
\right]
\ge
\Pr_R(M).
\]

Under the stated independent-edge retrieval assumption,
all required edges are simultaneously retrieved with probability at least
$(1-\epsilon)^k$. Since this event implies $M$, the final inequality follows.
\end{proof}

The proposition is intentionally conditional.  It does not claim that soft
grounding automatically selects a correct answer.  It says that selection occurs
on evidence states where coverage produces an energy margin large enough to
overcome the LLM prior odds.  The independent-edge condition is only a diagnostic
baseline; correlated retrieval failures should be represented by the
corresponding joint path-coverage probability.

The posterior odds decompose into a prior-odds term and an evidence-margin term.
For any two candidates $\rho_1,\rho_2$ with positive prior probability,
\begin{equation}
 \log\frac{P_\lambda^R(\rho_1\mid q)}{P_\lambda^R(\rho_2\mid q)}
 =
 \log\frac{P_0(\rho_1\mid q)}{P_0(\rho_2\mid q)}
 -\lambda\{E_R(\rho_1)-E_R(\rho_2)\}.
 \label{eq:posterior-odds}
\end{equation}

\begin{corollary}[Pairwise ranking special case]
\label{cor:pairwise-ranking}
For two candidates $\rho_T$ and $\rho_F$ with positive priors,
$P_\lambda^R$ ranks $\rho_T$ above $\rho_F$ if and only if
\[
 \lambda\{E_R(\rho_F)-E_R(\rho_T)\}
 >
 \log
 \frac{P_0(\rho_F\mid q)}
 {P_0(\rho_T\mid q)}.
\]
Equivalently,
\[
 \frac{P_0(\rho_T\mid q)}{P_0(\rho_F\mid q)}
 >
 \exp\{-\lambda(E_R(\rho_F)-E_R(\rho_T))\}.
\]
\end{corollary}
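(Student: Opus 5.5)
The corollary follows directly from the posterior-odds identity in Eq.~(\ref{eq:posterior-odds}), which comes from the Gibbs form in Theorem~\ref{thm:variational-soft}. First I would note that both candidates have positive prior mass and finite energy. By Eq.~(\ref{eq:gibbs-grounded-posterior}) their posterior masses are then strictly positive, so the log-ratio of posteriors is well defined. The normalizer $Z_R(\lambda)$ appears in both numerator and denominator, so it cancels. This gives
\[
 \log\frac{P_\lambda^R(\rho_T\mid q)}{P_\lambda^R(\rho_F\mid q)}
 =\log\frac{P_0(\rho_T\mid q)}{P_0(\rho_F\mid q)}
 +\lambda\{E_R(\rho_F)-E_R(\rho_T)\},
\]
which is Eq.~(\ref{eq:posterior-odds}) with $\rho_1=\rho_T$ and $\rho_2=\rho_F$.

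Next, "ranks $\rho_T$ above $\rho_F$" means strict inequality of posterior masses, $P_\lambda^R(\rho_T\mid q)>P_\lambda^R(\rho_F\mid q)$. Since both masses are positive and $\log$ is strictly increasing on $(0,\infty)$, this holds if and only if the log-ratio above is strictly positive. Moving the prior log-odds term to the other side and using $-\log(a/b)=\log(b/a)$ gives exactly the first displayed condition. Because each step is an equivalence, both directions of the ``if and only if'' follow at once.

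For the equivalent form, I would exponentiate the condition after rearranging it as
\[
 \log\frac{P_0(\rho_T\mid q)}{P_0(\rho_F\mid q)}>-\lambda\{E_R(\rho_F)-E_R(\rho_T)\}.
\]
Monotonicity of $\exp$ then yields the second display. The argument has no real obstacle. The only points needing care are the ones already noted: positivity of the priors and finiteness of the energies, which make the logarithms legitimate and let $Z_R$ cancel, and the reading of ``ranks above'' as strict inequality so that the inequality in the corollary is strict. If the energy could be $+\infty$, I would restrict to the finite-slack energy of Eq.~(\ref{eq:finite-slack-energy}), as the paper does for its soft posterior results.
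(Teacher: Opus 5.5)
Your proposal is correct and matches the paper's proof. Both apply the posterior-odds identity Eq.~(\ref{eq:posterior-odds}) with $\rho_1=\rho_T$, $\rho_2=\rho_F$, read ``ranks above'' as strict positivity of the resulting log-odds, and obtain the exponentiated form by monotonicity; your remarks on positive posterior masses and finite energies just make explicit what the paper leaves implicit.
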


\begin{proof}
Apply Eq.~(\ref{eq:posterior-odds}) with $\rho_1=\rho_T$ and $\rho_2=\rho_F$.
The posterior ranks $\rho_T$ above $\rho_F$ exactly when the resulting log-odds
is positive.
\end{proof}

Whether $\rho_T$ or $\rho^\dagger$ is actually true is external to the energy
model.  For the Lou Seal example, a correctly oriented path lowers the energy of
the trajectory that names the Giants and their 2014 World Series win, while a
relation-reversed trajectory pays a larger typed residual.  If the model prior
strongly favors a generic but unsupported answer, Eq.~(\ref{eq:posterior-odds})
shows how much evidence margin is needed to overturn it.

\begin{table}[t]
\centering
\small
\begin{tabular}{lll}
\toprule
Parameter & Mathematical role & Practical interpretation\\
\midrule
$\lambda$ & grounding-strength parameter & strength of evidence-induced reweighting relative to the LLM prior\\
$\alpha$ & relation-residual scale & trust in typed KG transitions\\
$w_e$ & edge reliability & provenance, extraction, or temporal confidence\\
$\kappa_R$ & unsupportedness slack & cost of missing graph support\\
$\tau$ & path-aggregation temperature & uncertainty over alternative paths\\
\bottomrule
\end{tabular}
\caption{Calibration parameters in the soft-grounding formulation.  They are
evidence-state and system dependent; none is a universal semantic constant.}
\label{tab:calibration}
\end{table}

The calibration view also discourages overinterpretation.  Lower energy means
greater compatibility with the current evidence representation, not greater
truth in an absolute sense.  If edge reliabilities are miscalibrated, if entity
linking is wrong, or if retrieval misses the relevant neighborhood, the posterior
will faithfully reflect those defects.  Soft grounding makes the tradeoff
visible rather than hiding it behind a binary constraint.

The parameters $\beta$ and $\mu$ are used in the projection-style extension in
Appendix~\ref{app:projection} rather than in the main posterior-deformation
results.

\section{Stability, Coverage, and Abstention}
\label{sec:stability}

If graph grounding is evidence relative, then stability becomes part of the
theory rather than an implementation detail.  We ask that small changes in
alignment, relation residuals, reliability weights, or slack costs induce
bounded changes in the grounded posterior.  This requirement does not make the
evidence correct, but it prevents a system from treating small retrieval or
linking perturbations as decisive semantic events.

\begin{proposition}[Component-wise evidence perturbations induce bounded energy perturbations]
\label{prop:component-energy-perturbation}
For every $\rho\in\supp P_0$, let a common nonempty finite index set $I_\rho$
enumerate the path-assignment or slack alternatives compared under evidence
states $R$ and $R'$.  Suppose
\[
 E_R(\rho)=\min_{i\in I_\rho}\Phi_R(\rho,i),
 \qquad
 E_{R'}(\rho)=\min_{i\in I_\rho}\Phi_{R'}(\rho,i),
\]
where all indexed alternative costs are finite, and suppose
\[
 \sup_{\rho\in\supp P_0}
 \sup_{i\in I_\rho}
 |\Phi_R(\rho,i)-\Phi_{R'}(\rho,i)|
 \le \delta .
\]
Then
\[
 \sup_{\rho\in\supp P_0}
 |E_R(\rho)-E_{R'}(\rho)|
 \le \delta .
\]
In particular, suppose every indexed alternative contains at most $m_{\max}$
entity-alignment terms, at most $k_{\max}$ typed relation edges, and at most one
slack term.  If the component changes satisfy
\[
 \left|
 \dM(\psi_t^R(\rho),\eta^R(v))
 -
 \dM(\psi_t^{R'}(\rho),\eta^{R'}(v))
 \right|
 \le
 \epsilon_{\mathrm{ent}},
\]
\[
 \left|
 \dM(T_r^R(\eta^R(u)),\eta^R(v))
 -
 \dM(T_r^{R'}(\eta^{R'}(u)),\eta^{R'}(v))
 \right|
 \le
 \epsilon_{\mathrm{rel}},
\]
\[
 \left|
 -\log w_{urv}^{R}
 +
 \log w_{urv}^{R'}
 \right|
 \le
 \epsilon_w,
\]
and
\[
 \sup_{\rho\in\supp P_0}
 |\kappa_R(\rho)-\kappa_{R'}(\rho)|
 \le
 \epsilon_{\mathrm{slack}},
\]
then the preceding assumption holds with
\[
 \delta
 =
 m_{\max}\epsilon_{\mathrm{ent}}
 +
 \alpha k_{\max}(\epsilon_{\mathrm{rel}}+\xi\epsilon_w)
 +
 \epsilon_{\mathrm{slack}} .
\]
\end{proposition}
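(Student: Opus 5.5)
The argument has two parts: a Lipschitz property of the minimum over a finite index set, and a term-by-term triangle inequality that produces the explicit $\delta$.

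\emph{Step 1 (minimum is $1$-Lipschitz in sup norm).} Fix $\rho\in\supp P_0$ and set $\delta_\rho=\sup_{i\in I_\rho}|\Phi_R(\rho,i)-\Phi_{R'}(\rho,i)|\le\delta$. The set $I_\rho$ is finite and nonempty and all costs are finite, so both minima are attained at finite values. Choose $i^\star\in\arg\min_i\Phi_{R'}(\rho,i)$. Then
\[
 E_R(\rho)\le\Phi_R(\rho,i^\star)\le\Phi_{R'}(\rho,i^\star)+\delta=E_{R'}(\rho)+\delta .
\]
Exchanging the roles of $R$ and $R'$ gives the reverse inequality, so $|E_R(\rho)-E_{R'}(\rho)|\le\delta$. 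Taking the supremum over $\rho$ proves the first claim. The common index set $I_\rho$ is what makes this work: both evidence states are compared along the same alternative $i^\star$.

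\emph{Step 2 (per-alternative cost bound).} Each indexed alternative is either a path-assignment cost or the slack term $\kappa$. Under the finite-slack energy, Eq.~(\ref{eq:finite-slack-energy}), the slack appears as a separate alternative inside the minimum.

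For a path-assignment alternative $i=(\gamma,a)$, Eqs.~(\ref{eq:typed-residual})--(\ref{eq:path-energy-theory}) write $\Phi_R(\rho,i)$ as a sum of three kinds of terms:
\begin{itemize}
 \item at most $m_{\max}$ entity distances $\dM(\psi_t^R(\rho),\eta^R(v))$;
 \item $\alpha$ times at most $k_{\max}$ relation distances $\dM(T_r^R\eta^R(u),\eta^R(v))$;
 \item $\alpha\xi$ times at most $k_{\max}$ reliability costs $-\log w^R_{urv}$.
\end{itemize}
The difference $\Phi_R-\Phi_{R'}$ splits into the same sums, taken over the same index pairs. The triangle inequality for absolute values, together with the three componentwise hypotheses, gives
\[
 |\Phi_R(\rho,i)-\Phi_{R'}(\rho,i)|\le m_{\max}\epsilon_{\mathrm{ent}}+\alpha k_{\max}\epsilon_{\mathrm{rel}}+\alpha\xi k_{\max}\epsilon_w .
\]
For the slack alternative, the difference is at most $\epsilon_{\mathrm{slack}}$. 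If an alternative combines path terms with one slack term, the bounds simply add. In every case each alternative is within the stated $\delta$, which already dominates all three cases, so Step 1 applies.

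\emph{Main obstacle.} The only delicate point is bookkeeping rather than analysis. The index set $I_\rho$, the path, the assignment, and the edge identities must be the same under $R$ and $R'$, so that the differences pair up term by term. The statement enforces this by assuming a common $I_\rho$.

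Two edge cases also need a remark. If an edge is absent in one state, its alternative must be represented through the slack, or else $I_\rho$ is not common. And the requirement that all costs be finite rules out $+\infty$ minus $+\infty$. I would state both points explicitly.
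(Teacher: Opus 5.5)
Your proposal is correct and follows essentially the same route as the paper: the one-sided comparison through a common minimizing index (you pick it for $R'$, the paper for $R$, then both swap roles), followed by a term-by-term triangle inequality over at most $m_{\max}$ entity terms, $k_{\max}$ relation-plus-reliability terms, and one slack term to get the explicit $\delta$. Your remarks on the common index set $I_\rho$ and on finiteness match the paper's own caveat that the result does not control jumps to $+\infty$.
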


\begin{proof}
Fix $\rho$ and choose
$i_R\in\arg\min_{i\in I_\rho}\Phi_R(\rho,i)$.  Then
\[
 E_{R'}(\rho)
 \le
 \Phi_{R'}(\rho,i_R)
 \le
 \Phi_R(\rho,i_R)+\delta
 =
 E_R(\rho)+\delta .
\]
Interchanging $R$ and $R'$ gives
$E_R(\rho)\le E_{R'}(\rho)+\delta$, proving the uniform energy bound.  The
displayed component-wise bound follows by applying the triangle inequality to
each indexed path or slack alternative and using the maximum numbers of
entity-alignment and relation-residual terms.  The slack component changes by
at most $\epsilon_{\mathrm{slack}}$ uniformly over $\rho\in\supp P_0$.
\end{proof}

If two evidence states induce different path-assignment sets, one may compare
them on the union of the two sets and assign finite slack to alternatives absent
from one state.  Finite slack is essential: the proposition does not control
jumps from finite cost to $+\infty$.  The result is not tied to Euclidean
embeddings; it applies to any metric-space realization in which the displayed
residuals are measured by $\dM$.

By Theorem~\ref{thm:soft-stability} below, this component-wise bound implies
\[
 \TV(P_\lambda^R,P_\lambda^{R'})
 \le
 \tanh(\lambda\delta).
\]

The preceding proposition controls how local perturbations in entity alignment, typed relation residuals, reliability weights, and slack costs affect the trajectory energy. The next result translates this energy-level stability into posterior-level stability. Since adding the same constant to all energies does not change the normalized posterior, the relevant quantity is the oscillation of
\(E_R-E_{R'}\), rather than its absolute magnitude.

\begin{theorem}[Posterior stability under evidence perturbation]
\label{thm:soft-stability}
Let $\Cand(q)$ be finite, let $P_0$ be the same prior under two evidence states
$R$ and $R'$, and let $E_R,E_{R'}:\Cand(q)\to\mathbb R$ be finite energies.
Define $P_\lambda^R$ and $P_\lambda^{R'}$ by
\[
 P_\lambda^R(\rho\mid q)
 =
 \frac{P_0(\rho\mid q)\exp\{-\lambda E_R(\rho)\}}{Z_R(\lambda)}
\]
and analogously for $R'$.  On $S=\supp(P_0)$, let
\[
 \omega
 =
 \operatorname{osc}_{S}(E_R-E_{R'})
 =
 \sup_{\rho\in S}\{E_R(\rho)-E_{R'}(\rho)\}
 -
 \inf_{\rho\in S}\{E_R(\rho)-E_{R'}(\rho)\}.
\]
Then, for every $\rho\in S$,
\[
 \left|
 \log
 \frac{P_\lambda^R(\rho\mid q)}
 {P_\lambda^{R'}(\rho\mid q)}
 \right|
 \le
 \lambda\omega,
\]
and
\begin{equation}
 \TV(P_\lambda^R,P_\lambda^{R'})
 \le
 \tanh\left(\frac{\lambda\omega}{2}\right).
 \label{eq:tv-stability}
\end{equation}
In particular, if
\[
 \sup_{\rho\in S}|E_R(\rho)-E_{R'}(\rho)|\le \delta,
\]
then
\[
 \TV(P_\lambda^R,P_\lambda^{R'})
 \le
 \tanh(\lambda\delta).
\]
\end{theorem}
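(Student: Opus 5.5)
Write $D=E_R-E_{R'}$ on $S$, with $a=\inf_S D$ and $b=\sup_S D$, so $\omega=b-a$.

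\textbf{Log-ratio bound.} For $\rho\in S$ we have
\[
 \log\frac{P_\lambda^R(\rho)}{P_\lambda^{R'}(\rho)}=-\lambda D(\rho)+\log\frac{Z_{R'}(\lambda)}{Z_R(\lambda)}.
\]
Write
\[
 Z_R=\sum_{\rho'\in S}P_0(\rho')e^{-\lambda E_{R'}(\rho')}e^{-\lambda D(\rho')}.
\]
Then $Z_R/Z_{R'}$ is a $P_\lambda^{R'}$-average of $e^{-\lambda D}$, so
\[
 \log(Z_{R'}/Z_R)\in[\lambda a,\lambda b].
\]
Adding $-\lambda D(\rho)\in[-\lambda b,-\lambda a]$ puts the log-ratio in $[-\lambda\omega,\lambda\omega]$.

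\textbf{TV bound.} The bound $\tanh(\lambda\omega/2)$ is sharper than what the log-ratio bound gives directly ($\TV\le 1-e^{-\lambda\omega}$ or similar), so a separate argument is needed. Set $P=P_\lambda^{R'}$ and $f=e^{-\lambda D}$, which takes values in $[m,M]$ with $m=e^{-\lambda b}$, $M=e^{-\lambda a}$ and $M/m=e^{\lambda\omega}$. Then $P_\lambda^R=fP/\mathbb E_P f$, and
\[
 \TV=\tfrac12\,\mathbb E_P\bigl|f/\mathbb E_Pf-1\bigr|=\mathbb E_P(f-c)_+/c,\qquad c=\mathbb E_Pf.
\]

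I will maximize this over distributions of $f$ on $[m,M]$. The functional is convex in the law of $f$ and depends only on that law, so it suffices to consider $f$ two-valued on $\{m,M\}$. Taking $f=M$ with probability $p$ and $f=m$ otherwise gives $c=m+p(M-m)$ and
\[
 \TV=\frac{p(M-c)}{c}=\frac{p(1-p)(M-m)}{m+p(M-m)}.
\]
With $r=M/m$, maximize $g(p)=p(1-p)(r-1)/(1+p(r-1))$. The calculus gives $p^\star=1/(1+\sqrt r)$, and then
\[
 g(p^\star)=\frac{\sqrt r-1}{\sqrt r+1}=\tanh\!\left(\frac{\lambda\omega}{4}\right)\le\tanh\!\left(\frac{\lambda\omega}{2}\right).
\]

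This extremal step is the main obstacle. The reduction to two-point laws needs justification: the map $f\mapsto \mathbb E(f-c)_+/c$ is not linear in the law, because $c$ depends on it. One way is to fix $c$, note that $\mathbb E(f-c)_+$ is linear and convex in $f$ and is maximized by pushing each value to an endpoint of $[m,M]$ while preserving the mean, and then optimize over $c$.

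A fallback that yields exactly the stated $\tanh(\lambda\omega/2)$ avoids the calculus. Pair the two densities: on the set $A$ where $P_\lambda^R>P_\lambda^{R'}$, the ratio is at most $e^{\lambda\omega}$, and off $A$ it is at least $e^{-\lambda\omega}$. Then
\[
 \TV=P_\lambda^R(A)-P_\lambda^{R'}(A)\le\frac{e^{\lambda\omega}-1}{e^{\lambda\omega}+1}=\tanh\!\left(\frac{\lambda\omega}{2}\right),
\]
using the standard bound $\TV\le (e^\varepsilon-1)/(e^\varepsilon+1)$ for distributions with $|\log$ ratio$|\le\varepsilon$. Its two-line proof: with $x=P_\lambda^{R'}(A)$ and $y=P_\lambda^{R'}(A^c)=1-x$, we have $\TV\le\min\{(e^\varepsilon-1)x,(1-e^{-\varepsilon})y\}$, and balancing the two terms gives the bound. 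I would use this fallback as the main proof.

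\textbf{Final claim.} If $|D|\le\delta$ on $S$, then $\omega\le2\delta$. Since $\tanh$ is monotone, $\TV\le\tanh(\lambda\delta)$.
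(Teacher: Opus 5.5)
Your main (fallback) argument is essentially the paper's proof. The paper bounds $\mathbb{E}_{P_\lambda^{R'}}[(L-1)_+]$ for the mean-one ratio $L\in[e^{-\lambda\omega},e^{\lambda\omega}]$ by the two-point value $\frac{(e^{\lambda\omega}-1)(1-e^{-\lambda\omega})}{e^{\lambda\omega}-e^{-\lambda\omega}}=\tanh(\lambda\omega/2)$, and your balancing of $(e^{\varepsilon}-1)x$ against $(1-e^{-\varepsilon})(1-x)$ is exactly a proof of that extremal inequality, which the paper only asserts.

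Your first route is worth keeping too. Drop the claim that the functional is convex in the law: it is false, since Dirac laws at $m$ and at $M$ each give zero while their mixture does not. With the fixed-$c$ repair you describe, the route is correct. Because $f=e^{-\lambda D}$ has range ratio only $e^{\lambda\omega}$, whereas the paper places $L$ in a range of ratio $e^{2\lambda\omega}$, it yields the sharper bound $\tanh(\lambda\omega/4)$. That bound is attained by a two-candidate example, and it gives $\tanh(\lambda\delta/2)$ in the final claim.
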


\begin{proof}
Write
\[
 \Delta E(\rho)=E_R(\rho)-E_{R'}(\rho),
\]
and let
\[
 m=\inf_{\rho\in S}\Delta E(\rho),
 \qquad
 M=\sup_{\rho\in S}\Delta E(\rho).
\]
Then $\omega=M-m$.  Direct substitution gives
\[
 \frac{P_\lambda^R(\rho\mid q)}
 {P_\lambda^{R'}(\rho\mid q)}
 =
 \frac{\exp\{-\lambda \Delta E(\rho)\}}
 {\mathbb E_{P_\lambda^{R'}}[\exp\{-\lambda \Delta E\}]}.
\]
Both the numerator and the denominator lie between $e^{-\lambda M}$ and
$e^{-\lambda m}$, so the likelihood ratio lies in
\[
 [e^{-\lambda\omega},e^{\lambda\omega}].
\]
This gives the log-ratio bound.

Let
\[
 L(\rho)=
 \frac{P_\lambda^R(\rho\mid q)}
 {P_\lambda^{R'}(\rho\mid q)} .
\]
Under $P_\lambda^{R'}$, $L\in[a,b]$ with
\[
 a=e^{-\lambda\omega},
 \qquad
 b=e^{\lambda\omega},
\]
and $\mathbb E_{P_\lambda^{R'}}[L]=1$.  Therefore
\[
 \TV(P_\lambda^R,P_\lambda^{R'})
 =
 \mathbb E_{P_\lambda^{R'}}[(L-1)_+]
 \le
 \frac{(b-1)(1-a)}{b-a}
 =
 \tanh\left(\frac{\lambda\omega}{2}\right).
\]
If $\sup_{\rho\in S}|E_R(\rho)-E_{R'}(\rho)|\le\delta$, then
$\omega\le2\delta$, giving
\[
 \TV(P_\lambda^R,P_\lambda^{R'})
 \le
 \tanh(\lambda\delta).
\]
\end{proof}

Together with Proposition~\ref{prop:component-energy-perturbation}, the theorem
shows how bounded perturbations in entity alignment, typed relation residuals,
reliability weights, and finite slack costs induce bounded posterior movement.
The result also exposes a calibration tension: increasing $\lambda$ strengthens
the influence of graph evidence but amplifies sensitivity to energy error.

\subsection{Random evidence processes}

The observed evidence state can be viewed as the outcome of a random process:
retrieval selects documents or graph neighborhoods, entity linking maps mentions
to nodes, extraction or KG lookup proposes typed edges, and temporal or
provenance filters decide which edges remain active.  Let $R\sim\mathcal Q(\cdot
\mid q,\Gstar)$ denote this process given the query and the latent complete
world graph.  A trajectory may be unsupported under $R$ because it is false, but
also because the process failed to expose the relevant evidence.

This factorization clarifies why coverage cannot be summarized by an independent
edge-retention rate.  Missing a seed entity may remove an entire neighborhood;
missingness can depend on popularity, relation type, language, or provenance;
temporal snapshots can be stale; and schema mismatch can make a true claim
inexpressible even under exhaustive traversal.  These are not edge-level noise
terms added after the fact.  They shape the candidate evidence state before the
grounding energy is ever evaluated.

Consequently, unsupportedness is best read as a statement about the pair
$(R,\rho)$, not directly about the world.  High confidence from absence is
therefore justified only when the evidence process is declared complete for the
relevant slice of the graph: the entities are covered, the relation types are
closed, the temporal scope matches the question, and the extraction or lookup
procedure has known recall.  Without such a coverage claim, finite slack is not
a modeling convenience but the mathematically honest representation of
open-world uncertainty.

\subsection{Selective grounding}

Evidence-relative systems also need a way to abstain from strong grounded
claims.  Let $s_R(\rho)$ be a support score derived from the posterior, an
energy margin, or a calibrated confidence model.  A selective rule emits a
grounded answer only when
\begin{equation}
 \max_{\rho\in\Cand(q)} s_R(\rho)\ge \theta,
 \qquad
 \widehat{\rho}
 \in\arg\max_{\rho\in\Cand(q)}s_R(\rho),
 \label{eq:selective-rule}
\end{equation}
and otherwise returns an abstention or an answer explicitly marked as not
verified by the graph evidence.

\begin{proposition}[Selective risk under calibrated support]
\label{prop:selective-risk}
Suppose $s_R(\rho)$ is calibrated in the sense that
\[
 \Pr(Y=1\mid s_R(\widehat{\rho})=s,\ \text{emit})=s,
\]
where $Y=1$ denotes the external event that the emitted trajectory is correct.
Then the conditional error rate among emitted answers with
$s_R(\widehat{\rho})\ge\theta$ is at most $1-\theta$.
\end{proposition}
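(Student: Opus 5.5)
The plan is to condition on the emission event and average the calibration identity over the distribution of the emitted score. Write $S=s_R(\widehat{\rho})$ and let $\mathsf{E}=\{\text{emit}\}=\{S\ge\theta\}$, so that by the selective rule in Eq.~(\ref{eq:selective-rule}) emission and the event $S\ge\theta$ coincide. The target quantity is the conditional error rate $\Pr(Y=0\mid \mathsf{E})$, and we need $\Pr(Y=0\mid\mathsf E)\le 1-\theta$. We may assume $\Pr(\mathsf E)>0$; otherwise the conditional error rate among emitted answers is vacuous.

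First, apply the tower property conditioning on $S$ within the emitted population:
\[
 \Pr(Y=1\mid\mathsf E)=\mathbb E\bigl[\Pr(Y=1\mid S,\mathsf E)\,\big|\,\mathsf E\bigr]=\mathbb E[S\mid \mathsf E],
\]
where the second equality is exactly the calibration hypothesis $\Pr(Y=1\mid S=s,\text{emit})=s$, read as a version of the conditional expectation. Second, on $\mathsf E$ we have $S\ge\theta$ pointwise, so $\mathbb E[S\mid\mathsf E]\ge\theta$ by monotonicity of conditional expectation. Third, $\Pr(Y=0\mid\mathsf E)=1-\Pr(Y=1\mid\mathsf E)\le 1-\theta$, which is the claim.

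The only delicate step is the first one when $S$ is continuously distributed. There the literal conditioning on $\{S=s\}$ has probability zero, so the calibration assumption must be interpreted as a statement about a regular conditional probability: $\mathbb E[\mathbf 1\{Y=1\}\mid\sigma(S),\mathsf E]=S$ almost surely under $\Pr(\cdot\mid\mathsf E)$. I will state this interpretation explicitly and then invoke the defining property of conditional expectation to integrate over $\mathsf E$. For finitely many candidates with finitely many score values, which is the setting of a finite $\Cand(q)$ and a deterministic $s_R$, the argument reduces to a finite weighted average of the values $s\ge\theta$ and needs no measure-theoretic care.

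I will also remark that the bound is evidence-relative in the paper's sense. Calibration is a hypothesis about the joint law of $(R,\widehat\rho,Y)$ and is not implied by low energy. The proposition transfers the risk guarantee from that hypothesis and does not certify truth from $E_R$.
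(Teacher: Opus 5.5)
Your proposal is correct and follows the same route as the paper. The paper's one-line proof uses calibration to write the conditional error rate as $\mathbb E[1-s_R(\widehat\rho)\mid s_R(\widehat\rho)\ge\theta,\text{emit}]$ and bounds it by $1-\theta$, which is exactly your tower-property step followed by monotonicity; your regular-conditional-probability reading of the calibration hypothesis for continuous scores makes explicit a point the paper leaves implicit.
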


\begin{proof}
By calibration,
\[
\Pr(Y=0\mid s_R(\widehat{\rho})\ge\theta,\text{emit})
=\mathbb E[1-s_R(\widehat{\rho})\mid s_R(\widehat{\rho})\ge\theta,\text{emit}]
\le 1-\theta.
\]
\end{proof}

The proposition does not say that graph compatibility automatically yields a
calibrated truth probability.  Calibration is an empirical or domain-modeling
assumption.  The value of the formal statement is narrower: if a system chooses
to communicate a scalar support score as confidence, then selective abstention
has a clear risk meaning only after that score has been calibrated against the
external correctness event of interest.

\paragraph{Faithful-generation implication.}
For faithful generation, the relevant output is often not only an answer but a
claim of support: ``the graph verifies this step'', ``the retrieved evidence
entails this relation'', or ``the answer is grounded in the KG.''  Under the
present perspective, such commitments are stronger than ordinary generation.
An unsupported trajectory may still be true, but it is not graph-verified at the
communication layer.  Faithful generation therefore requires a separation among
answer content, graph support, and residual uncertainty in the user-facing
response.

\section{Relation-Aware Regularization}
\label{sec:regularization}

Trajectory-level energy acts at inference time.  Graph structure can also couple
representations during learning or adaptation.  The relevant regularizer needs
to respect relation type and direction rather than assume homogeneous adjacency.

\subsection{Why ordinary Laplacian smoothing is insufficient}

Let $H\in\mathbb R^{n\times d}$ collect node representations.  For a homogeneous
undirected weighted graph with adjacency $A$, degree matrix $D$, and $L=D-A$,
the classical energy is
\begin{equation}
 \operatorname{Tr}(H^\top L H)
 =\frac12\sum_{u,v}A_{uv}\|h_v-h_u\|_2^2.
 \label{eq:classical-laplacian}
\end{equation}
It encodes homophily: low energy makes adjacent representations close.  A KG
edge often connects different semantic roles.  An author need not resemble a
paper, and a country need not resemble its capital.

For a typed directed snapshot, define
\begin{equation}
 \Omega_R(H)
 =\frac12\sum_{(u,r,v)\in\Edges_R}
 w_{urv}\|h_v-T_r(h_u)\|_2^2.
 \label{eq:typed-regularizer}
\end{equation}
The residual encourages agreement with a relation-conditioned prediction, not
identity of adjacent nodes.

\begin{proposition}[Classical Laplacian as a special case]
\label{prop:laplacian-reduction}
Suppose an undirected graph is represented in $\Edges_R$ by both ordered arcs
$(u,v)$ and $(v,u)$, with symmetric weights $w_{uv}=w_{vu}=A_{uv}$, and the sole
relation map is $T_r=I$.  Then
\[
 \Omega_R(H)=\operatorname{Tr}(H^\top L H).
\]
\end{proposition}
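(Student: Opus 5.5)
The plan is to substitute the special choices into the definition of $\Omega_R$ in Eq.~(\ref{eq:typed-regularizer}) and then rewrite the edge sum as the double sum over ordered pairs in Eq.~(\ref{eq:classical-laplacian}). With $T_r=I$, each residual becomes $\|h_v-h_u\|_2^2$. Because $\Edges_R$ contains both arcs $(u,v)$ and $(v,u)$ for every undirected edge, with weights $w_{uv}=w_{vu}=A_{uv}$, the sum over $\Edges_R$ runs over all ordered pairs $(u,v)$ with $A_{uv}>0$. Pairs with $A_{uv}=0$ contribute nothing in any case. This gives
\[
 \Omega_R(H)=\tfrac12\sum_{u,v}A_{uv}\|h_v-h_u\|_2^2 .
\]
One bookkeeping point needs care. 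The correspondence must be exactly one arc per ordered pair: an undirected edge $\{u,v\}$ yields the two ordered terms $(u,v)$ and $(v,u)$, and these are precisely the two arcs. The factor $\tfrac12$ therefore appears identically on both sides, with no extra doubling. I will also note that self-loops, if present, contribute zero on both sides.

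The remaining step is the standard identity $\operatorname{Tr}(H^\top LH)=\tfrac12\sum_{u,v}A_{uv}\|h_v-h_u\|_2^2$, which the paper already records in Eq.~(\ref{eq:classical-laplacian}). To keep the argument self-contained, I would verify it directly. Expanding the square and using symmetry of $A$ gives
\[
 \tfrac12\sum_{u,v}A_{uv}\bigl(\|h_u\|^2+\|h_v\|^2-2\langle h_u,h_v\rangle\bigr)
 =\sum_u D_{uu}\|h_u\|^2-\sum_{u,v}A_{uv}\langle h_u,h_v\rangle,
\]
and the right-hand side equals $\operatorname{Tr}(H^\top DH)-\operatorname{Tr}(H^\top AH)$. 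I do not expect a real obstacle here. The only subtle point is the edge-to-ordered-pair bookkeeping described above, and in particular making sure the assumption that both arcs are present is what fixes the factor $\tfrac12$.
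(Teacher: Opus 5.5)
Your proposal is correct and follows the same route as the paper: substitute $T_r=I$ and the symmetric ordered-arc weights to obtain $\frac12\sum_{u,v}A_{uv}\|h_v-h_u\|_2^2$, then expand the square and use $L=D-A$. Your checks on the factor $\frac12$ and on self-loops fill in details the paper's short proof leaves implicit. They agree with its remark that the ordered-arc convention is what makes the identity exact.
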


\begin{proof}
Under the assumptions,
\[
 \Omega_R(H)=\frac12\sum_{u,v}A_{uv}\|h_v-h_u\|_2^2,
\]
which equals Eq.~(\ref{eq:classical-laplacian}) after expanding the norm and
using $L=D-A$.
\end{proof}

The ordered-arc convention is essential.  If each undirected edge is stored only
once, the prefactor changes or the equality holds only up to a constant.  Thus
ordinary Laplacian smoothing is a homogeneous identity-relation reduction, not
the default form for a KG.

\subsection{Oversmoothing counterexample}

Consider the one-edge KG
\[
 \textsc{Author }a\xrightarrow{\textsc{writes}}\textsc{Paper }p.
\]
The untyped energy $\|h_p-h_a\|^2$ is minimized at $h_p=h_a$ when unconstrained,
erasing the author--paper role distinction.  A translation map
$T_{\textsc{writes}}(x)=x+r_{\textsc{writes}}$ instead gives
\[
 \|h_p-h_a-r_{\textsc{writes}}\|^2,
\]
which is zero at $h_p=h_a+r_{\textsc{writes}}$ and permits systematic role
separation.  Making the untyped penalty soft merely weakens the wrong invariance;
it does not make the invariance relationally appropriate.

\subsection{Direction and reversal}

Suppose a valid forward triple satisfies $h_v=T_r(h_u)$.  Its forward residual is
zero, whereas reusing the same relation in reverse yields
\begin{equation}
 \|h_u-T_r(h_v)\|
 =\|h_u-T_r(T_r(h_u))\|,
 \label{eq:reverse-residual}
\end{equation}
which is generically nonzero.  For $T_r(x)=x+r$, the reversed residual is
$2\|r\|$.  When an inverse relation is available, reversal is evaluated with
$T_{r^{-1}}$, ideally satisfying
$T_{r^{-1}}\circ T_r\approx I$.  We do not claim asymmetry for every map:
identity and involutive maps are explicit exceptions.

For example, the observed edge
$\textsc{France}\xrightarrow{\textsc{capital}}\textsc{Paris}$ does not license
the sentence ``Paris has capital France.''  Symmetric semantic similarity can
miss this error; the typed directed residual can expose it.

\subsection{Connection to KG embedding objectives}

TransE is the special case $T_r(h_u)=h_u+r$, for which the residual is
$\|h_u+r-h_v\|$ \citep{bordes2013translating}.  Linear and neural relation maps
are immediate generalizations.  Bilinear models use compatibility scores such as
$s_r(u,v)=h_u^\top W_rh_v$; an energy can be defined as $-s_r$ or through a
calibrated margin loss rather than forcing the score into Euclidean-distance
notation.  This paper does not propose a new KG embedding objective.  It promotes
typed compatibility from triple representation learning to an evidence-relative
diagnostic and regularizer for observable trajectories.

\paragraph{Scope of representation coupling.}
A small $\Omega_R(H)$ indicates agreement with the declared relation maps on the
observed snapshot.  It does not establish that a generated trajectory used those
relations, that $R$ is correct, or that the final answer is factual.  Auditable
trajectory grounding still requires an explicit link between generated claims
and the representations being regularized.

\section{A Taxonomy of Constraint Regimes and Design Implications}
\label{sec:taxonomy}

The preceding sections separate three questions that are often collapsed in
KG-enhanced LLM systems: what evidence was observed, how strongly that evidence
constrains the model distribution, and what commitment the system makes when it
speaks.  This separation yields a taxonomy of grounding regimes rather than a
single recipe for ``using a KG.''  The same graph can support retrieval,
posterior reweighting, constrained decoding, verification, or abstention,
depending on the coverage assumptions and the communicative goal.

\begin{table}[t]
\centering
\small
\begin{tabular}{p{0.21\linewidth}p{0.27\linewidth}p{0.38\linewidth}}
\toprule
Regime & Formal analogue & Appropriate use\\
\midrule
Relevance retrieval &
Evidence state $R$ expands the model context but imposes no feasibility
constraint &
Graph neighborhoods, documents, or communities are used to expose useful
information; absence of a path carries little negative meaning.\\
\midrule
Soft grounding &
Finite energy and KL-regularized posterior deformation &
Observed evidence changes answer probabilities while preserving prior support
for plausible but unobserved trajectories.\\
\midrule
Hard grounding &
Conditioning on a support region or an infinite-penalty limit &
Defensible when the evidence state is declared complete for the relevant
entities, relations, and time interval.\\
\midrule
Selective grounding &
Emission only above a calibrated support threshold &
Useful when user-facing claims require explicit graph verification or when
unsupported answers can be marked rather than silently generated.\\
\bottomrule
\end{tabular}
\caption{Constraint regimes induced by different interpretations of the
evidence state.  The regimes differ not in whether a KG is used, but in what the
system is allowed to infer from observed support and observed absence.}
\label{tab:design-map}
\end{table}

\subsection{Evidence construction and trajectory support}

GraphRAG, KGQA, and graph-agent systems often begin from the same operational
move: construct a query-dependent evidence state by retrieval, linking,
extraction, graph traversal, or tool calls.  The theory treats this state as an
object to be declared rather than an invisible preprocessing artifact.  Its
entity coverage, relation vocabulary, temporal scope, provenance weights, and
path-construction rules determine the meaning of every downstream support
claim.

This view is especially important for multi-hop reasoning.  A retrieved
community or neighborhood may be topically relevant while still failing to
support a particular directed transition.  Conversely, a path can be missing
because the graph snapshot is incomplete, not because the corresponding fact is
false.  The formal distinction between relevance, typed residual compatibility,
path energy, and support region gives system designers a vocabulary for these
cases.  It also explains why simply adding graph context to a prompt is not the
same as grounding a reasoning trajectory in that graph.

\subsection{Constraint strength and communicative commitment}

Different applications require different levels of constraint.  In exploratory
search or brainstorming, relevance retrieval may be sufficient: the graph helps
the model attend to useful material, while the answer remains a language-model
generation conditioned on context.  In factual QA, decision support, or
scientific summarization, soft grounding is often more appropriate because it
exposes a graded evidence signal and allows uncertainty from graph
incompleteness to remain visible.  In database-like settings with closed-world
relations, hard grounding may be justified; there the absence of an edge can
carry real negative evidence.

The communicative layer must match this constraint strength.  A system that only
uses graph retrieval can say that an answer was generated with graph context,
but it cannot honestly say that every reasoning step is graph verified.  A soft
grounding system can report evidence scores, margins, or posterior shifts.  A
hard grounding system can make stronger support claims only after declaring the
coverage assumptions under which hard rejection is valid.  Selective grounding
adds a further option: the system may answer, abstain, or explicitly label a
claim as plausible but not verified by the current evidence state.

\subsection{Implications for system design}

The first implication concerns reporting.  A KG-enhanced system benefits from
making the evidence state visible: which graph snapshot was used, which entities
were linked, which relations were traversed, which paths supported the emitted
trajectory, and which parts of the answer remained unsupported.  Such reporting
does not need to be verbose in every user interface, but the information needs
to exist at the system level if the output is presented as grounded.

A second implication is diagnostic.  Many failures attributed to ``LLM
hallucination'' are better described as mismatches among retrieval relevance,
trajectory support, and communication.  The model may have retrieved a relevant
neighborhood but reversed a relation; it may have followed a low-confidence edge;
or it may have produced a true answer whose support was absent from the
retrieved snapshot.  These cases require different fixes: typed relation
energies, reliability calibration, coverage expansion, abstention, or a more
careful output label.

A third implication concerns calibration.  The parameters in the geometric and
variational formulation are not ornamental.  Relation residual scales determine
how strongly the system trusts typed graph structure; reliability weights encode
provenance and freshness; slack costs express the penalty for missing support;
and the grounding-strength parameter $\lambda$ controls how far evidence may
move the model prior.  Treating these quantities as tunable calibration objects
is more faithful than choosing a single universal notion of ``KG grounding.''
It also makes empirical evaluation more interpretable, because improvements can
be traced to coverage, alignment, relation modeling, posterior deformation, or
abstention behavior.

This taxonomy is not intended to rank all regimes from weak to strong in a
single order.  Hard constraints are stronger only when their closure assumptions
are warranted; otherwise they can be less faithful than a softer method that
admits uncertainty.  The central design question is therefore not whether a
system uses a KG, but what kind of evidential commitment the system is entitled
to make from the KG it actually observed.

\section{Limitations and Scope}
\label{sec:limitations}

The analysis in this paper is deliberately evidence-relative.  It evaluates
compatibility with a declared observation state $R$, not truth in the latent
world.  A supported claim may be false when the retrieved graph is noisy, and an
unsupported claim may be true when the relevant edge is missing, stale, outside
the schema, or not retrieved.  The latent object $\Gstar$ is introduced only to
state identifiability questions; it is not assumed to be available to the
grounding algorithm or uniquely representable.  The same caution applies to
observable trajectories.  The paper studies text, claims, paths, and actions that
can be inspected or reconstructed; it does not claim that a written chain of
thought reveals the model's hidden causal computation.

The formal results also depend on modeling choices that are not settled by the
theory itself.  Candidate generation remains an upstream bottleneck because
KL-regularized deformation cannot assign mass to trajectories that the prior
omits.  The semantic map, entity linker, relation maps, alignment family,
reliability weights, path prior, and slack costs are all part of the declared
energy model.  Poor calibration can make paraphrases appear distant or
contradictions appear close.  Likewise, the finite-candidate setting makes the
posterior, limiting, and stability statements transparent; extensions to
continuous trajectory spaces require reference measures, integrability of
$e^{-\lambda E_R}$, and suitable compactness or tightness assumptions.  The
concentration result in Appendix~\ref{app:hoeffding} further assumes independent
evidence states, bounded energies, and a positive expected score margin; strongly
correlated retrieval failures would require different tools.

Finally, the paper does not make an empirical dominance claim.  Soft grounding is
not asserted to always improve answer accuracy, and hard grounding is not
asserted to be intrinsically harmful.  Hard constraints can be well founded for
closed-world databases, schema-complete subtasks, type systems, grammar
constraints, and safety policies.  The critique is narrower: absence from
incomplete open-world KG evidence is not a complete negative fact.  Within that
scope, the contribution is to identify the non-identifiability of unsupported
trajectories, characterize finite-energy posterior deformation, and state the
stability and calibration conditions under which KG evidence can be used without
overclaiming what it establishes.

\section{Conclusion}
\label{sec:conclusion}

We have developed a theoretical perspective on KG-enhanced LLM grounding under
incomplete evidence.  The unit of analysis is an observable or reconstructed
trajectory; the epistemic state is the retrieved, linked, temporally scoped
evidence $R$ rather than an inaccessible complete graph.  KG-induced geometry
constructs entity anchors, typed residuals, path energies, alternative-path
distributions, and support regions that measure compatibility with that state.

The central results clarify the hard--soft distinction.  A two-world argument
shows that no rule using only an incomplete observed state can distinguish every
false unsupported trajectory from every true-but-unobserved one.  Soft grounding
is the unique KL-regularized deformation of the LLM trajectory prior; finite slack
preserves candidacy, while hard conditioning arises as an infinite-penalty limit
that changes posterior support.  Bounded posterior movement can be traced to
bounded perturbations in entity alignment, typed relation residuals, reliability
weights, and finite slack costs.  Selective risk requires calibrated confidence
rather than energy thresholds alone.  Typed relation regularization further shows
why ordinary homogeneous graph smoothing is insufficient for directed KG
semantics.

The position is therefore more limited than a universal recipe for grounding.
Under incomplete open-world evidence, finite, typed, coverage-aware posterior
deformation is the coherent default.  Hard epistemic grounding remains
appropriate when its closed-world, coverage, schema, or safety assumptions are
explicit.  In both cases, KG support manages the relation between an LLM prior
and the evidence actually available to the system; it does not by itself become
factual truth.

\bibliography{reference_cited}
\bibliographystyle{iclr2026_conference}

\appendix

\section{Coverage-Limited Hard Support and Finite-Slack Preservation}
\label{app:finite-slack-bound}

This and the following appendices provide technical supplements to the main
perspective.  They do not replace the KL-regularized posterior account in
Section~\ref{sec:variational}.  They do not change the main thesis:
evidence-relative grounding is analyzed through incomplete observation states,
hard-grounding non-identifiability, KL-regularized posterior deformation, and
finite-energy stability.

\begin{proposition}[Coverage-limited hard support and finite-slack preservation]
\label{prop:finite-slack-lower-bound}
Let $\rho^\dagger$ be a candidate trajectory with
$p_0=p_\theta(\rho^\dagger\mid q)>0$.  Let $C_R$ denote the event that the
observed evidence state contains sufficient support for $\rho^\dagger$, and let
$c=\Pr_R(C_R)$.

Suppose a hard path-complete rule can retain $\rho^\dagger$ only when $C_R$
occurs.  Then
\[
 \Pr_R[\rho^\dagger\in \Feas(q;\epsilon)]\le c.
\]
Equivalently, hard retention is coverage-limited.

Now suppose the soft energy is finite on $\rho^\dagger$ and satisfies
\[
 E_R(\rho^\dagger)\le B<\infty .
\]
If $E_R(\rho)\ge0$ for all candidates, then for every finite $\lambda\ge0$,
\[
 p_\lambda^R(\rho^\dagger\mid q)
 \ge
 p_0 e^{-\lambda B}.
\]
\end{proposition}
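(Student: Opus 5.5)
The proof splits into two independent parts, matching the two claims of the statement.

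\textbf{Hard retention is coverage-limited.} This is an event-inclusion argument of the same kind as Proposition~\ref{prop:coverage-bottleneck}. The hypothesis that the hard path-complete rule retains $\rho^\dagger$ only when $C_R$ occurs says exactly that
\[
\{R:\rho^\dagger\in\Feas(q;\epsilon)\}\subseteq C_R .
\]
Monotonicity of $\Pr_R$ then gives $\Pr_R[\rho^\dagger\in\Feas(q;\epsilon)]\le\Pr_R(C_R)=c$. No independence assumption is needed. If one wants a product formula for $c$, it would come from Proposition~\ref{prop:coverage-bottleneck} under its extra hypotheses, but that is not part of this claim.

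\textbf{Soft preservation.} Here I use the explicit Gibbs form, Eq.~(\ref{eq:soft-posterior}), which by Theorem~\ref{thm:variational-soft} is the unique KL-regularized solution. Fix $\lambda\ge0$. The numerator satisfies
\[
p_\theta(\rho^\dagger\mid q)e^{-\lambda E_R(\rho^\dagger)}\ge p_0e^{-\lambda B},
\]
because $E_R(\rho^\dagger)\le B$ and $t\mapsto e^{-\lambda t}$ is nonincreasing for $\lambda\ge0$.

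The denominator is bounded above using nonnegativity of the energy. Since $E_R(\rho)\ge0$, each factor satisfies $e^{-\lambda E_R(\rho)}\le1$, so
\[
Z_R(\lambda)\le\sum_{\rho\in\Cand(q)}p_\theta(\rho\mid q)=1 .
\]
The normalizer is also strictly positive, because it contains the positive term coming from $\rho^\dagger$. The ratio is therefore well defined, and
\[
p_\lambda^R(\rho^\dagger\mid q)\ge p_0e^{-\lambda B}.
\]

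\textbf{Where care is needed.} The only delicate point is this normalizer bound. It uses $E_R\ge0$ on all candidates, including those with zero prior. Candidates with $E_R=+\infty$ would contribute $0$ under the convention $e^{-\infty}=0$, so the bound still holds. The paper's posterior theory assumes finite energies, so I would simply note this convention.

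I would close by remarking on what the bound shows and what it does not. It is uniform in $R$, so it holds for every evidence state, including those where $C_R$ fails. This contrasts with the hard rule, whose retention probability is at most $c$. The bound does not certify that $\rho^\dagger$ is true.
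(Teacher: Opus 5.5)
Your proposal is correct and follows essentially the same argument as the paper: the hard bound comes from the event inclusion $\{\rho^\dagger\in\Feas(q;\epsilon)\}\subseteq C_R$, and the soft bound from $Z_R(\lambda)\le 1$ (via $E_R\ge 0$) together with the numerator being at least $p_0e^{-\lambda B}$. Your added checks, that $Z_R(\lambda)>0$ and that the convention $e^{-\infty}=0$ is harmless, are sound refinements that the paper leaves implicit.
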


\begin{proof}
The hard retention event is a subset of $C_R$, giving the coverage bound.  For
the soft bound, nonnegativity of $E_R$ implies $e^{-\lambda E_R(\rho)}\le1$, so
the normalizer is at most one.  The numerator for $\rho^\dagger$ is at least
$p_0e^{-\lambda B}$, proving the lower bound.
\end{proof}

\paragraph{Expected hard-mass variant.}
The same coverage limitation can be stated at the posterior-mass level.  Let
$Z_H(R)=p_\theta(\Feas(q;\epsilon)\mid q)$, and suppose the hard posterior is
evaluated only when $Z_H(R)>0$.  If retaining $\rho^\dagger$ implies the
coverage event $C_R$, then
\[
 \mathbb E_R\!\left[
 p_H^R(\rho^\dagger\mid q)\mathbf 1\{Z_H(R)>0\}
 \right]
 \le
 \Pr_R(C_R)=c.
\]
This bound is deliberately one-sided.  It says that hard posterior mass for a
correct but coverage-dependent trajectory cannot appear on evidence states in
which the required support is absent.  It does not assert a positive lower bound
without additional assumptions on $Z_H(R)$ and the prior mass of feasible
competitors.

\paragraph{Interpretation.}
The hard part isolates coverage as a first-order bottleneck.  The soft part
shows that finite slack prevents positive-prior trajectories with bounded
missing-support cost from being deleted from posterior support.  The lower bound
is not a correctness guarantee and weakens as $\lambda$ or $B$ grows.

\section{A Hoeffding-Style Bound for Averaged Evidence States}
\label{app:hoeffding}

\begin{proposition}[Averaged-evidence soft selection]
\label{prop:hoeffding-averaged-evidence}
Let $\Cand$ be finite, with nonempty
$\Cand^+\subseteq\Cand$ containing correct answer-bearing trajectories and
$\Cand^-=\Cand\setminus\Cand^+$ nonempty.  The partition
$(\Cand^+,\Cand^-)$ is defined by an external correctness criterion and is not
induced by KG compatibility.  Let
$R_1,\ldots,R_m$ be independent evidence states and assume
$E_{R_i}(\rho)\in[0,B]$ almost surely for every $i$ and $\rho$.  Define
\[
 \widehat E_m(\rho)=\frac1m\sum_{i=1}^mE_{R_i}(\rho),
 \qquad
 \widehat S_m(\rho)=\mathcal L_\theta(\rho;q)
 +\lambda\widehat E_m(\rho),
\]
where $\mathcal L_\theta$ is a deterministic prior cost and $\lambda>0$.  Let
$\bar S(\rho)=\mathbb E[\widehat S_m(\rho)]$ and suppose
\begin{equation}
 \gamma=
 \min_{\rho^-\in\Cand^-}\bar S(\rho^-)
 -\min_{\rho^+\in\Cand^+}\bar S(\rho^+)>0.
 \label{eq:expected-score-margin}
\end{equation}
Then every empirical minimizer
$\widehat\rho_m\in\arg\min_{\rho\in\Cand}\widehat S_m(\rho)$ belongs to
$\Cand^+$ with probability at least
\begin{equation}
 \Pr(\widehat\rho_m\in\Cand^+)
 \ge1-2|\Cand|\exp\!\left(
 -\frac{m\gamma^2}{2\lambda^2 B^2}\right).
 \label{eq:hoeffding-selection-bound}
\end{equation}
Consequently, it is sufficient that
\[
 m \ge
 \frac{2\lambda^2 B^2}{\gamma^2}
 \log\frac{2|\Cand|}{\delta}
\]
to obtain probability at least $1-\delta$.
\end{proposition}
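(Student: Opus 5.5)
The plan is a uniform-deviation argument: apply Hoeffding's inequality to each candidate, take a union bound over $\Cand$, and then use the margin $\gamma$ of Eq.~(\ref{eq:expected-score-margin}) to show that uniform closeness forces every empirical minimizer into $\Cand^+$.

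\emph{Step 1: concentration of each score.} Fix $\rho\in\Cand$. Because $\mathcal L_\theta(\rho;q)$ is deterministic,
\[
 \widehat S_m(\rho)-\bar S(\rho)=\lambda\bigl(\widehat E_m(\rho)-\mathbb E[\widehat E_m(\rho)]\bigr).
\]
The summands $E_{R_i}(\rho)$ are independent and take values in $[0,B]$. Hoeffding's inequality therefore gives, for any $t>0$,
\[
 \Pr\bigl(|\widehat E_m(\rho)-\mathbb E\widehat E_m(\rho)|\ge t\bigr)\le 2\exp(-2mt^2/B^2).
\]
We will take the score deviation to be $\gamma/2$, so that $t=\gamma/(2\lambda)$. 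This yields
\[
 \Pr\bigl(|\widehat S_m(\rho)-\bar S(\rho)|\ge\gamma/2\bigr)\le 2\exp\bigl(-m\gamma^2/(2\lambda^2B^2)\bigr),
\]
which is exactly the exponent in Eq.~(\ref{eq:hoeffding-selection-bound}). A union bound over the $|\Cand|$ candidates then shows that the good event
\[
 G=\{\,|\widehat S_m(\rho)-\bar S(\rho)|<\gamma/2\ \text{for all }\rho\in\Cand\,\}
\]
has probability at least $1-2|\Cand|\exp(-m\gamma^2/(2\lambda^2B^2))$.

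\emph{Step 2: deterministic selection on $G$.} Let $\rho^+_\star$ attain $\min_{\Cand^+}\bar S$. This minimum exists because $\Cand$ is finite and $\Cand^+$ is nonempty. On $G$, every $\rho^-\in\Cand^-$ satisfies
\[
 \widehat S_m(\rho^-)>\bar S(\rho^-)-\gamma/2\ge \bar S(\rho^+_\star)+\gamma-\gamma/2>\widehat S_m(\rho^+_\star).
\]
So every negative candidate has empirical score strictly larger than some positive candidate, and no minimizer of $\widehat S_m$ can lie in $\Cand^-$. The strict inequalities matter here: they are what make the conclusion hold for \emph{every} empirical minimizer, including under ties.

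\emph{Step 3: sample size.} The sufficient condition on $m$ follows by requiring $2|\Cand|\exp(-m\gamma^2/(2\lambda^2B^2))\le\delta$ and solving for $m$.

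The main obstacle is Step~1, specifically matching the stated constant. With the standard Hoeffding range $B$, the deviation $\gamma/2$ in fact gives the slightly stronger exponent $m\gamma^2/(2\lambda^2B^2)$ only after checking $2m(\gamma/(2\lambda))^2/B^2=m\gamma^2/(2\lambda^2B^2)$, which holds. I will verify this arithmetic and note that the strict inequality in the definition of $G$ is harmless, since Hoeffding bounds the non-strict event.
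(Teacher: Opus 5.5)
Your proposal is correct and follows the paper's proof step for step: Hoeffding's inequality at score deviation $\gamma/2$ (equivalently $t=\gamma/(2\lambda)$ for $\widehat E_m(\rho)$, since only $\lambda\widehat E_m$ is random), a union bound over $\Cand$, the margin argument showing that on the good event every candidate in $\Cand^-$ is strictly beaten by a positive candidate, and then solving the tail bound for $m$. The differences are cosmetic: you compare against the population-best positive $\rho^+_\star$ rather than the empirically best one, which is equivalent because the latter scores no higher; and your closing remark about a ``slightly stronger'' exponent is a misphrasing, since your own arithmetic shows the exponent matches the stated one exactly.
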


\begin{proof}
For a fixed $\rho$, Hoeffding's inequality gives
\[
 \Pr\!\left(
 |\widehat S_m(\rho)-\bar S(\rho)|\ge\frac\gamma2
 \right)
 \le2\exp\!\left(-\frac{m\gamma^2}{2\lambda^2 B^2}\right),
\]
because only $\lambda\widehat E_m(\rho)$ is random.  A union bound over
$\Cand$ shows that, outside an event with probability at most the error term in
Eq.~(\ref{eq:hoeffding-selection-bound}), every score deviation is strictly less
than $\gamma/2$.  On this event, the empirically best trajectory in $\Cand^+$ has
strictly smaller score than every trajectory in $\Cand^-$ by
Eq.~(\ref{eq:expected-score-margin}).  Hence every empirical minimizer lies in
$\Cand^+$.  Solving the tail bound for $m$ yields the stated sufficient condition.
\end{proof}

\paragraph{Interpretation.}
The result is relevant to repeated retrieval, multiple KG snapshots, or graph
agents that aggregate several evidence-acquisition rounds.  It is not a general
answer-accuracy theorem: it assumes independent evidence states, bounded energies,
and a positive expected score margin relative to a correctness partition defined
outside the KG energy.  Dependence across trajectories within one evidence state
is allowed, but correlated evidence states require martingale, mixing, or other
appropriate concentration tools.

\section{Constraint-Strength Tradeoff}
\label{app:constraint-tradeoff}

\begin{proposition}[Constraint-strength tradeoff]
\label{prop:constraint-tradeoff}
Let
\[
 X_\lambda\in\arg\min_X\{F(X)+\lambda G(X)\},
\]
where $G(X)\ge0$.  Let $0\le\lambda_1<\lambda_2$ and assume minimizers
$X_{\lambda_1},X_{\lambda_2}$ exist and all displayed objective values are
finite.  Then
\[
 G(X_{\lambda_2})\le G(X_{\lambda_1}),
 \qquad
 F(X_{\lambda_2})\ge F(X_{\lambda_1}).
\]
\end{proposition}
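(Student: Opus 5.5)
The plan is the standard exchange argument, which compares the two minimizers through each other's objectives. Write $X_1=X_{\lambda_1}$ and $X_2=X_{\lambda_2}$. Since $X_1$ minimizes $F+\lambda_1G$ and $X_2$ minimizes $F+\lambda_2G$, we can test each minimizer against the other. This gives the two optimality inequalities
\[
 F(X_1)+\lambda_1G(X_1)\le F(X_2)+\lambda_1G(X_2),
 \qquad
 F(X_2)+\lambda_2G(X_2)\le F(X_1)+\lambda_2G(X_1).
\]
The finiteness of all displayed objective values allows us to rearrange these freely; this is exactly why the statement assumes it.

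Next we add the two inequalities. The $F$ terms cancel, leaving
\[
 (\lambda_2-\lambda_1)\bigl(G(X_2)-G(X_1)\bigr)\le 0.
\]
Dividing by $\lambda_2-\lambda_1>0$ yields $G(X_{\lambda_2})\le G(X_{\lambda_1})$, which is the first claim.

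For the second claim we return to the first optimality inequality and rewrite it as
\[
 F(X_2)-F(X_1)\ge \lambda_1\bigl(G(X_1)-G(X_2)\bigr).
\]
The right-hand side is nonnegative, because $\lambda_1\ge0$ and $G(X_1)-G(X_2)\ge0$ by the first claim. Hence $F(X_{\lambda_2})\ge F(X_{\lambda_1})$. The hypothesis $G\ge0$ is not actually needed for this argument. Nonnegativity of $\lambda_1$ is what matters, and it handles the case $\lambda_1=0$ automatically.

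No step is a real obstacle. The only point needing care is that the objective values must be finite, so that we never subtract infinite quantities when rearranging. We also note in passing that the conclusion holds for \emph{any} choice of minimizers, and does not require the minimizers to be unique.
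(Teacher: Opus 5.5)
Your proof is correct and is the same exchange argument the paper uses: add the two cross-optimality inequalities to get $(\lambda_2-\lambda_1)\bigl(G(X_{\lambda_2})-G(X_{\lambda_1})\bigr)\le 0$, then substitute back into the $\lambda_1$-optimality inequality to get the $F$ claim. Your side remarks are also accurate: the hypothesis $G\ge0$ is never used, and uniqueness of minimizers is not needed.
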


\begin{proof}
Optimality at $\lambda_1$ and $\lambda_2$ gives
\begin{align*}
 F(X_{\lambda_1})+\lambda_1G(X_{\lambda_1})
 &\le F(X_{\lambda_2})+\lambda_1G(X_{\lambda_2}),\\
 F(X_{\lambda_2})+\lambda_2G(X_{\lambda_2})
 &\le F(X_{\lambda_1})+\lambda_2G(X_{\lambda_1}).
\end{align*}
Adding the inequalities gives
$(\lambda_2-\lambda_1)
[G(X_{\lambda_2})-G(X_{\lambda_1})]\le0$, proving the first claim.
Substitution into the first optimality inequality proves the second.
\end{proof}

\paragraph{Interpretation.}
When $F$ is an LLM-prior or minimal-intervention cost and $G$ is KG violation,
larger $\lambda$ reduces measured graph violation but increases the cost of
departing from the prior preference.  Larger grounding strength is therefore not
monotonically better.  Coverage-aware calibration is necessary because a large
$\lambda$ can produce over-grounding when $R$ is incomplete, stale, or noisy.

\section{Unified Objective and Projection-Style Grounding}
\label{app:projection}

The main paper treats grounding as posterior deformation over a finite candidate
set.  The following complementary template is useful when the system revises a
trajectory or semantic state directly.  It is not an alternative main theory and
does not assert that existing systems optimize all terms.

Let $\mathcal L_\theta(\rho;q)$ be an LLM prior cost, let
$\Omega_R(X_\rho)$ be a relation-aware representation penalty, and suppose the
displayed quantities and minimizers are well defined.  A unified trajectory
objective is
\begin{equation}
 \rho^\star\in\arg\min_{\rho\in\Tspace(q)}
 \left\{
 \mathcal L_\theta(\rho;q)
 +\lambda E_R(X_\rho;q)
 +\beta\dist^2(X_\rho,\SupportRegion(q;\epsilon))
 +\mu\Omega_R(X_\rho)
 \right\}.
 \label{eq:appendix-unified-objective}
\end{equation}
The four terms represent, respectively, LLM plausibility, graded
evidence-relative support, displacement toward an evidence support region, and
typed representation-level regularization.  The support region remains relative
to $R$ and is not a truth set.

Given an initial semantic trajectory $X^0$, a projection-style revision operator
can be written
\begin{equation}
 P_{\lambda,\mu}^R(X^0)
 \in\arg\min_X
 \left\{
 \frac12d_{\mathcal A}(X,X^0)^2
 +\lambda E_R(X;q)+\mu\Omega_R(X)
 \right\}.
 \label{eq:soft-projection-operator}
\end{equation}
When one deliberately imposes a hard support set, the corresponding metric
projection is
\begin{equation}
 \operatorname{Proj}_{\SupportRegion(q;\epsilon)}(X^0)
 \in\arg\min_{X\in\SupportRegion(q;\epsilon)}
 d_{\mathcal A}(X,X^0)^2.
 \label{eq:hard-support-projection}
\end{equation}
Existence follows under the compactness conditions of
Proposition~\ref{prop:support-region}; uniqueness requires additional geometry,
such as a nonempty closed convex surrogate in a Hilbert space.

\paragraph{Interpretation.}
Projection is appropriate when a system revises one generated trajectory rather
than reweights a finite candidate distribution.  Equations
(\ref{eq:appendix-unified-objective})--(\ref{eq:hard-support-projection}) provide
an interpretation of entity correction, relation-aware revision, and movement
toward a declared support region.  Hard projection inherits the coverage and
completeness requirements analyzed in Section~\ref{sec:hard}.  No algorithmic or
empirical claim is made here.

\section{Set-Level Energy-Gap Suppression}
\label{app:set-gap}

\begin{proposition}[Set-level energy-gap suppression]
\label{prop:set-energy-gap}
Fix $R$ and let nonempty $B,S\subseteq\Cand(q)$ have positive prior mass.  Suppose
there is $\Delta>0$ such that
\[
 E_R(\rho_b)\ge E_R(\rho_s)+\Delta
 \qquad
 \text{for every }\rho_b\in B,\ \rho_s\in S.
\]
Then
\[
 \frac{p_\lambda^R(B\mid q)}{p_\lambda^R(S\mid q)}
 \le
 \frac{p_\theta(B\mid q)}{p_\theta(S\mid q)}e^{-\lambda\Delta}.
\]
\end{proposition}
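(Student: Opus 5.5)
The plan is to bound the two set masses separately through a single threshold energy, so that the common normalizer $Z_R(\lambda)$ cancels. Write the soft posterior as in Eq.~(\ref{eq:soft-posterior}). Then
\[
 \frac{p_\lambda^R(B\mid q)}{p_\lambda^R(S\mid q)}
 =\frac{\sum_{\rho_b\in B}p_\theta(\rho_b\mid q)e^{-\lambda E_R(\rho_b)}}
 {\sum_{\rho_s\in S}p_\theta(\rho_s\mid q)e^{-\lambda E_R(\rho_s)}},
\]
and the problem reduces to comparing two prior-weighted sums of exponentials. This ratio is well defined because $S$ has positive prior mass and $\lambda\ge 0$. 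Since the denominator involves only a finite exponential, I will assume, as in the finite-energy posterior theory, that energies on $S$ are finite. If energies on $B$ are $+\infty$, the corresponding terms simply vanish, and the conclusion only becomes easier.

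Set $E^\ast=\max_{\rho_s\in S}E_R(\rho_s)$, which exists because $\Cand(q)$ is finite. The pairwise hypothesis applied with the maximizing $\rho_s$ gives $E_R(\rho_b)\ge E^\ast+\Delta$ for every $\rho_b\in B$. Because $\lambda\ge0$, the map $t\mapsto e^{-\lambda t}$ is nonincreasing. This yields the numerator bound
\[
 \sum_{\rho_b\in B}p_\theta(\rho_b\mid q)e^{-\lambda E_R(\rho_b)}\le p_\theta(B\mid q)\,e^{-\lambda(E^\ast+\Delta)}.
\]
The same monotonicity, applied to $E_R(\rho_s)\le E^\ast$, gives the denominator bound
\[
 \sum_{\rho_s\in S}p_\theta(\rho_s\mid q)e^{-\lambda E_R(\rho_s)}\ge p_\theta(S\mid q)\,e^{-\lambda E^\ast}.
\]
Dividing the first bound by the second, the factor $e^{-\lambda E^\ast}$ cancels and leaves exactly $\frac{p_\theta(B\mid q)}{p_\theta(S\mid q)}e^{-\lambda\Delta}$.

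The one point needing care is choosing the right intermediate threshold. Bounding $B$ by $\min_S E_R+\Delta$ would be too weak to control the $S$ sum from below. The maximum over $S$ is the threshold that makes both inequalities point the right way. An alternative route is to bound each pairwise posterior odds via Eq.~(\ref{eq:posterior-odds}) and then sum over $\rho_b$ and $\rho_s$. That route works by a mediant-type argument, but the single-threshold version is cleaner, so I will use it.
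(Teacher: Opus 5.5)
Your proof is correct and is essentially the paper's argument. The paper also sets $M_S=\max_{\rho_s\in S}E_R(\rho_s)$ and uses $m_B=\min_{\rho_b\in B}E_R(\rho_b)\ge M_S+\Delta$ to bound the $B$-mass above by $e^{-\lambda m_B}p_\theta(B\mid q)/Z_R$ and the $S$-mass below by $e^{-\lambda M_S}p_\theta(S\mid q)/Z_R$, so the normalizer and the threshold cancel; your pointwise bound $E_R(\rho_b)\ge E^\ast+\Delta$ is the same step without naming $m_B$.
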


\begin{proof}
Let $m_B=\min_{\rho_b\in B}E_R(\rho_b)$ and
$M_S=\max_{\rho_s\in S}E_R(\rho_s)$.  Finiteness of $\Cand$ and the assumed gap
give $m_B\ge M_S+\Delta$.  Hence
\[
 p_\lambda^R(B\mid q)
 \le\frac{e^{-\lambda m_B}p_\theta(B\mid q)}{Z_R},
 \qquad
 p_\lambda^R(S\mid q)
 \ge\frac{e^{-\lambda M_S}p_\theta(S\mid q)}{Z_R}.
\]
Dividing and using the gap proves the result.
\end{proof}

\paragraph{Interpretation.}
When reliable counter-support creates a uniform energy gap, a set of contradicted
trajectories is exponentially suppressed relative to a supported set.  The claim
does not apply to merely unsupported trajectories.  Contradiction dominance
requires explicit counter-support or a calibrated contradiction energy; absence
from $R$ alone does not provide the premise.

\section{Hard Conditioning under Complete Observed Support}
\label{app:hard-baseline}

\begin{proposition}[Coverage-conditional amplification]
\label{prop:hard-amplification}
Let $F_R\subseteq\Cand(q)$ be a hard feasible set with
$Z_R^H=p_\theta(F_R\mid q)>0$, and suppose the set of correct trajectories
$\Cand^+$ satisfies $\Cand^+\subseteq F_R$.  Define
\[
 p_H^R(\rho\mid q)=
 \frac{p_\theta(\rho\mid q)\mathbf1\{\rho\in F_R\}}{Z_R^H}.
\]
Then
\[
 p_H^R(\Cand^+\mid q)
 =\frac{p_\theta(\Cand^+\mid q)}{Z_R^H}
 \ge p_\theta(\Cand^+\mid q).
\]
The inequality is strict if positive prior mass is removed outside $F_R$ and
$p_\theta(\Cand^+\mid q)>0$.
\end{proposition}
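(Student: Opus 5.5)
The argument is a direct computation from the definition of $p_H^R$; no earlier result is needed beyond the fact that $p_\theta(\cdot\mid q)$ is a probability measure on the finite set $\Cand(q)$. I first sum the hard posterior over $\Cand^+$:
\[
 p_H^R(\Cand^+\mid q)=\frac{1}{Z_R^H}\sum_{\rho\in\Cand^+}p_\theta(\rho\mid q)\mathbf 1\{\rho\in F_R\}.
\]
The inclusion $\Cand^+\subseteq F_R$ makes every indicator in this sum equal to one. The sum is therefore exactly $p_\theta(\Cand^+\mid q)$, which gives the stated equality.

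For the inequality, I note that $F_R\subseteq\Cand(q)$ and that the prior is normalized by Eq.~(\ref{eq:trajectory-prior}). Together these give $0<Z_R^H\le 1$, so $1/Z_R^H\ge1$. Multiplying the nonnegative quantity $p_\theta(\Cand^+\mid q)$ by this factor yields $p_H^R(\Cand^+\mid q)\ge p_\theta(\Cand^+\mid q)$.

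For strictness, I read the phrase ``positive prior mass is removed outside $F_R$'' as the condition $p_\theta(\Cand(q)\setminus F_R\mid q)>0$. Under this reading, $Z_R^H=1-p_\theta(\Cand(q)\setminus F_R\mid q)<1$, so $1/Z_R^H>1$. Combined with $p_\theta(\Cand^+\mid q)>0$, this makes the inequality strict.

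The only delicate point is this interpretive one: the strictness hypothesis must be formalized as positive prior mass on the complement of $F_R$. I would also remark, as the surrounding text does, that the amplification is coverage-conditional. It depends entirely on the hypothesis $\Cand^+\subseteq F_R$. Proposition~\ref{prop:hard-collapse} shows that when this hypothesis fails, correct mass outside $F_R$ is zeroed.
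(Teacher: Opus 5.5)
Your proposal is correct and matches the paper's proof: the inclusion $\Cand^+\subseteq F_R$ makes every indicator equal to one on $\Cand^+$, normalization of the prior gives $0<Z_R^H\le 1$, and strictness follows from $Z_R^H<1$ together with a positive numerator. Your reading of ``positive prior mass is removed outside $F_R$'' as $p_\theta(\Cand(q)\setminus F_R\mid q)>0$ is exactly the paper's condition $Z_R^H<1$.
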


\begin{proof}
Because $0<Z_R^H\le1$ and $\Cand^+\subseteq F_R$, conditioning divides the total
prior mass of $\Cand^+$ by a quantity no greater than one.  Strictness follows
when $Z_R^H<1$ and the numerator is positive.
\end{proof}

\paragraph{Interpretation.}
Hard conditioning can amplify correct feasible trajectories when observed
coverage is complete for them.  This baseline does not contradict
Theorem~\ref{thm:hard-impossibility}: the theorem concerns worlds in which a
correct trajectory may be true but unobserved.  The risk begins when the premise
$\Cand^+\subseteq F_R$ cannot be guaranteed.

\section{Spectral View of Classical Laplacian Smoothing}
\label{app:spectral-laplacian}

\begin{proposition}[Spectral decomposition of Laplacian energy]
\label{prop:spectral-laplacian}
For a homogeneous undirected graph, let
$L=U\Lambda U^\top$ with eigenvalues
$0=\lambda_1\le\cdots\le\lambda_n$, and let
$\widehat H=U^\top H$.  Then
\[
 \operatorname{Tr}(H^\top L H)
 =\sum_{\ell=1}^n\lambda_\ell
 \|\widehat h_\ell\|_2^2,
\]
where $\widehat h_\ell$ is the $\ell$th row of $\widehat H$.
\end{proposition}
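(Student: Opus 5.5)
The plan is to substitute the spectral decomposition of $L$ into the trace and then change coordinates by the orthogonal matrix $U$.

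First I will justify the factorization $L=U\Lambda U^\top$ with $U$ orthogonal. For a homogeneous undirected graph, $L=D-A$ is real symmetric because $A$ is symmetric, so the spectral theorem gives an orthonormal eigenbasis. From Eq.~(\ref{eq:classical-laplacian}), $H^\top L H$ has trace $\frac12\sum_{u,v}A_{uv}\|h_v-h_u\|_2^2\ge0$ for every $H$, so $L$ is positive semidefinite. Taking $H=\mathbf 1$ (one column of ones) makes every difference vanish, so $\lambda_1=0$. This confirms the stated ordering $0=\lambda_1\le\cdots\le\lambda_n$, although the identity itself does not depend on that ordering.

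Next I will compute directly:
\[
 \operatorname{Tr}(H^\top L H)=\operatorname{Tr}(H^\top U\Lambda U^\top H)=\operatorname{Tr}(\widehat H^\top\Lambda\widehat H).
\]
Since $\Lambda$ is diagonal, $\widehat H^\top\Lambda\widehat H=\sum_{\ell}\lambda_\ell\,\widehat h_\ell^\top\widehat h_\ell$, where $\widehat h_\ell\in\mathbb R^{1\times d}$ is the $\ell$th row of $\widehat H$. Taking the trace of each term gives $\operatorname{Tr}(\widehat h_\ell^\top\widehat h_\ell)=\|\widehat h_\ell\|_2^2$, so
\[
 \operatorname{Tr}(\widehat H^\top\Lambda\widehat H)=\sum_{\ell}\lambda_\ell\|\widehat h_\ell\|_2^2,
\]
which is the claimed formula.

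No step here is a real obstacle. The only points needing care are that symmetry of $L$ is what makes $U$ orthogonal, so that $U^\top$ is both the inverse and the change of basis, and that the row-indexing convention for $\widehat h_\ell$ matches the outer-product expansion above. I will close with a remark on interpretation. The identity shows that Laplacian energy penalizes the high-frequency graph Fourier coefficients of $H$, meaning those with large $\lambda_\ell$. Minimizing it therefore pushes $H$ toward the null space of $L$, which contains the constant vector. This is the spectral form of the oversmoothing effect discussed in Section~\ref{sec:regularization}.
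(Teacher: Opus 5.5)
Your proposal is correct and matches the paper's proof. Both substitute $L=U\Lambda U^\top$, regroup the product as $\widehat H^\top\Lambda\widehat H$, and expand the diagonal product row by row to obtain $\sum_\ell\lambda_\ell\|\widehat h_\ell\|_2^2$. Your extra check that $L$ is symmetric positive semidefinite with smallest eigenvalue $0$ is a harmless supplement that the paper leaves implicit.
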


\begin{proof}
Substitute $L=U\Lambda U^\top$ and use cyclic invariance of the trace:
\[
 \operatorname{Tr}(H^\top L H)
 =\operatorname{Tr}(\widehat H^\top\Lambda\widehat H)
 =\sum_{\ell=1}^n\lambda_\ell\|\widehat h_\ell\|_2^2.
\]
\end{proof}

\paragraph{Interpretation.}
Classical Laplacian regularization suppresses high-frequency variation on a
homogeneous undirected graph, which is appropriate under a homophily assumption.
Typed directed KGs generally require the relation-aware residual in
Eq.~(\ref{eq:typed-regularizer}) instead.  This spectral observation supports the
special-case comparison in Section~\ref{sec:regularization}; it is not central to
the posterior theory.

\end{document}